\begin{document}
\pagestyle{headings}

\title{\textbf{Inverses, Conditionals and Compositional Operators in Separative Valuation Algebras}}

\author{Juerg Kohlas \\
\small Department of Informatics DIUF \\ 
\small University of Fribourg \\ 
\small CH -- 1700 Fribourg (Switzerland) \\ 
\small E-mail: \texttt{juerg.kohlas@unifr.ch} \\
\small \texttt{http://diuf.unifr.ch/drupal/tns/juerg\_kohlas
}
}
\date{\today}

\maketitle


\begin{abstract}
Compositional models were introduce by Jirousek and Shenoy in the general framework of valuation-based systems. They based their theory on an axiomatic system of valuations involving not only the operations of combination and marginalisation, but also of removal. They claimed that this systems covers besides the classical case of discrete probability distributions, also the cases of Gaussian densities and belief functions, and many other systems. 

Whereas their results on the compositional operator are correct, the axiomatic basis is not sufficient to cover the examples claimed above. We propose here a different axiomatic system of valuation algebras, which permits a rigorous mathematical theory of compositional operators in valuation-based systems and covers all the examples mentioned above. It extends the classical theory of inverses in semigroup theory and places thereby the present theory into its proper mathematical frame. Also this theory sheds light on the different structures of valuation-based systems, like regular algebras (represented by probability potentials), canncellative algebras (Gaussian potentials) and general separative algebras (density functions).
\end{abstract}

\tableofcontents


\section{Introduction}

\cite{jirousekshenoy14,jirousekshenoy15} introduced compositional models in the general framework of valuation-based systems. They based their theory on an axiomatic system of valuations involving not only the operations of combination and marginalisation, but also of removal. They claimed that this systems covers besides the classical case of discrete probability distributions, also the cases of Gaussian densities and belief functions, and many other systems. 

Whereas their results on the compositional operator are correct, the axiomatic basis is not sufficient to cover the examples claimed above. We propose here a different axiomatic system of valuation algebras, which permits a rigorous mathematical theory of compositional operators in valuation-based systems and covers all the examples mentioned above. It extends the classical theory of inverses in semigroup theory and places thereby the present theory into its proper mathematical frame. Also this theory sheds light on the different structures of valuation-based systems, like regular algebras (represented by probability potentials), canncellative algebras (Gaussian potentials) and general separative algebras (density functions).


\section{Separative Valuation Algebras}

In this section we briefly review the basic concepts of valuation algebras and in particular separative ones. Valuation-based systems were introduced in \cite{shenoyshafer90}. In \cite{kohlas03} the algebraic theory of the valuation algebras, algebraic structures based on valuation-based systems were defined and their algebraic theory to some extend developed. In particular, separative valuation algebras were introduced, algebras, which permit the removal of information. This is the basis of the present section.

In valuation-based systems, valuations represent information relative to some subsets of variables. These subsets form a lattice. The theory can as well developed for more general domains forming any \textit{lattice}. This covers then valuation-based systems whose elements provide information to partitions of some universe or families of compatible frames. So, let $(D,\leq)$ be a lattice whose elements are called \textit{domains} and denoted by lower-case letters like $x,y,z,\ldots$. Any pair $x,y$ of elements of $D$ has a least upper bound, denoted by $x \vee y$ and called the join of $x$ and $y$. The pair has also a greatest lower bound, denoted by $x \wedge y$ and called the meet of $x$ and $y$. In the case of subsets of variables, the lattice $(D;\leq)$ is distributive. But this need not be the case in general. For instance if $(D;\leq)$ is a lattice of partitions of some universe, it is no more distributive. For more about lattices we refer to \cite{daveypriestley97}

Let $\Psi$ denote a set whose elements are called \textit{valuations}. Elements of $\Psi$ are denoted by lower-case Greek letters like $\phi,\psi,\ldots$. Each valuation $\psi$ is associated with a domain in $D$ denoted by $d(\psi)$. Further, valuations can be combined and projected to lower domains. Thus, formally, we consider the following operations:
\begin{enumerate}
\item \textit{Labeling:} $d : \Psi \rightarrow D$, $\psi \mapsto d(\psi)$.
\item \textit{Combination:} $\cdot : \Phi \times \Phi \rightarrow \Phi$, $(\phi,\psi) \mapsto \phi \cdot \psi$.
\item \textit{Projection:} $\pi : \Phi \times D \rightarrow \Phi$, $(\psi,x) \mapsto \pi_x(\psi)$, defined only for $x \leq \psi$.
\end{enumerate}
These operations are subjected to the following axioms:
\begin{description}
\item[A1] \textit{Lattice:} $(D;\leq)$ is a lattice.
\item[A2]  \textit{Semiproup:} $(\Psi,\cdot)$ is a commutative semigroup.
\item[A3] \textit{Labeling:} $d(\phi \cdot \psi) = d(\phi) \vee d(\psi)$, $d(\pi_x(\psi)) = x$.
\item[A4] \textit{Projection:} If $x \leq y \leq d(\psi)$, then $\pi_x(\pi_y(\psi)) = \pi_x(\psi)$.
\item[A5] \textit{Combination:} If $d(\phi) = x$, $d(\psi) = y$, then $\pi_x(\phi \cdot \psi) = \phi \cdot \pi_{x \wedge y}(\psi)$.
\end{description}
This corresponds to the requirements of a valuation-based system as expressed in \cite{shenoyshafer90,jirousekshenoy14}. A system with signature $(\Psi,D;\leq,\wedge,\vee,\cdot,\pi)$ satisfying these axioms is called a \textit{valuation algebra}. Sometimes there are \textit{unit} elements $1_x$ in the semigroups of all elements with domain $x$ for  for all domains $x \in D$. These unit elements are assumed to satisfy:

\begin{description}
\item[A6] \textit{Units:} $\psi \cdot 1_x = \psi$, if $d(\psi) = x$, and $1_x \cdot 1_y = 1_{x \vee y}$.
\end{description}

In still other cases there are for all $x \in D$ null elements in the semigroups of all valuations with domain $x$. Null elements are assumed to satisfy

\begin{description}
\item[A7]  \textit{Null Elements:} $\psi \cdot 0_x = 0_x$ if $d(\psi) = x$, and if $x \leq y$, $d(\psi) = y$, then $\pi_x(\psi) = 0_x$ if and only if $\psi = 0_y$.
\end{description}

In some cases, a stronger verison of Axiom A5 holds:
\begin{description}
\item[A5'] \textit{Strong Combination:} If $d(\phi) = x$, $d(\psi) = y$ and $x \leq z \leq x \vee y$, then $\pi_z(\phi \cdot \psi) = \phi \cdot \pi_{y \wedge z}(\psi)$.
\end{description}
If the valuation algebra has units which satisfy Axiom A6, and if the lattice $(D;\leq)$ is modular, then A5' follows. But there are examples without units (for instance densities, see Section Ê\ref{sec:Exampl}), which satisfy A5'.

In \cite{shenoy94c,jirousekshenoy14} a removal operator was introduced, a kind of inverse to the combination operation. We propose here a mathematically more rigorous approach based on semigroup theory, which places the theory in the proper mathematical context and which serves also to clarify the algebraic structure of valuation  algebras with removal or division. A commutative semigroup like $(\Psi;\cdot)$ is called \textit{separative}, if $\phi \cdot \phi = \psi \cdot \psi = \phi \cdot \psi$ implies that $\phi = \psi$. This condition is necessary and sufficient to embed the semigroup $(\Psi;\cdot)$ into a semigroup $(\Psi^0;\cdot)$ which is a union of disjoint \textit{groups} $\delta(\psi)$,
\begin{eqnarray*}
\Psi^0 = \bigcup_{\psi \in \Psi} \delta(\psi).
\end{eqnarray*}
\cite{hewittzucker56}. We may consider $\Psi$ as a subset, a subsemigroup, of $\Psi^0$. Then any valuation $\psi$ in $\Psi$ has an \textit{inverse} $\psi^{-1}$ such that $\psi \psi^{-1} = f_\psi$, where $f_\psi$ is the unit in the group $\delta(\psi)$. In general, neither the inverses $\psi^{-1}$ nor the group units $f_\psi$ belong to $\Psi$, but only to $\Psi^0$. This will be illustrated by examples below (Section \ref{sec:Exampl}).

However, although separativity of the semigroup $(\Psi;\cdot)$ is necessary, it is not sufficient for the needs of division or removal in valuation algebras. Something a little bit stronger is needed, since division must be related to projection. This has been shown in \cite{kohlas03}. We summarise this theory here in a slightly more general framework. If $(\Psi;\cdot)$ is a separative semigroup,. we may define $\phi \equiv_\delta \psi$ if $\phi$ and $\psi$ belong to the same group, that is, if $\delta(\phi) = \delta(\psi)$. This is an equivalence relation on $\Psi$. Moreover, since $\phi \cdot \phi \in \delta(\phi)$, we have $\phi \cdot \phi \equiv_\delta \phi$. Further, the relation $\equiv_\delta$ is a semigroup congruence, that is, if $\phi \equiv_\delta \psi$ and $\eta$ is any other valuation, then also $\phi \cdot \eta \equiv_\delta \psi \cdot \eta$. This implies that the corresponding equivalence classes $[\psi]_\delta$ in $\psi$ are subsemigroups of $(\Psi;\cdot)$, since $\phi \equiv_\delta \psi$ implies $\phi \cdot \psi \equiv_\delta \phi \cdot \phi \equiv_\delta \phi$. Finally, these semigroups $[\psi]_\delta$ have the property that if $\phi,\psi,\eta \in [\psi]_\delta$, then $\eta \cdot \phi = \eta \cdot \psi$ implies $\phi = \psi$. Such semigroups are called \textit{cancellative} \cite{cliffordpreston67}.

Now, we have the ingredients to define \textit{separative valuation algebras}:

\begin{definition}
A valuation algebra $(\Psi,D;\leq,\wedge,\vee,\cdot,\pi)$ is called separative, if
\begin{description}
\item[S1] There is a combination congruence $\equiv_\delta$ in $\Psi$ such that for all $\psi \in \Psi$ and $x \leq d(\psi)$,
\begin{eqnarray*}
\psi \cdot \pi_x(\psi) \equiv_\delta \psi
\end{eqnarray*}
\item[S2] The semigroups $[\psi]_\delta$ are all cancellative.
\end{description}
\end{definition}

Note that S1 implies also $\psi \cdot \psi \equiv_\delta \psi$. For instance the multiplicative semigroup of positive integers is canellative. As is well known, it may be extended to the group of positive rational numbers, into which the positive integers are embedded. Now, this can be done for \textit{any} cancellative semigroup: Consider ordered pairs $(\psi,\phi)$ of elements of the same class $[\eta]_\delta$. Define for such pairs $(\phi,\psi) \equiv (\phi',\psi')$ if $\phi \cdot \psi' = \psi \cdot \phi'$. This is an equivalence relation among pairs $(\phi,\psi)$. Denote the corresponding equivalence classes by $[\phi,\psi]$. These classes represent the ``rational numbers'' or ``quotients'' extending the semigroup $[\psi]_\delta$ (see the next section for an illustration of this concept). In fact, let $\delta(\psi)$ denote the family of all equivalence classes $[\phi,\psi]$ of pairs in $[\psi]_\delta$. This is a group. The multiplication among elements of $\delta(\psi)$ is defined by
\begin{eqnarray}
[\phi,\psi] \cdot [\phi',\psi'] = [\phi \cdot \phi',\psi \cdot \psi'].
\end{eqnarray}
This operation is well defined, since the relation $\equiv$ is a multiplicative congruence. The unit is the class $[\psi,\psi]$ and the inverse of $[\phi,\psi]$ is $[\psi,\phi]$. The semigroup $[\psi]_\delta$ is embedded into the group by the one-to-one semigroup homomorphism $\psi \mapsto [\psi \cdot \psi,\psi]$. 

Let then
\begin{eqnarray*}
\Psi^0 = \bigcup_{\psi \in \Psi} \delta(\psi).
\end{eqnarray*}
be the union of the disjoint groups $\delta(\psi)$. This is a semigroup, where multiplication is defined as above, but this time between classes $[\phi,\psi]$ and $[\phi',\psi']$ belonging possibly to different groups. So, the semigroup $(\Psi^0;\cdot)$ is a union of disjoint groups and now the semigroup $(\Psi;\cdot)$ is embedded into it by the same map as above. The unit elements $f_\psi$ of the groups $\delta(\psi)$ are idempotent elements, $f_\psi \cdot f_\psi = f_\psi$, and they are closed under multiplication, that is $f_\phi \cdot f_\psi = f_{\phi \cdot \psi}$ is still idempotent and in fact the unit element of the group $\delta(\phi \cdot \psi)$. The idempotent elements $f_\psi$ form so an \textit{idempotent} subsemigroup $(F;\cdot)$ of $(\Psi^0;\cdot)$. It is well-known that in an idempotent semigroup a \textit{partial order} may be defined which determines a semilattice. So, we may define $f_\psi \leq f_\phi$ if $f_\psi \cdot f_\phi = f_\phi$. Under this order we have $f_\phi \cdot f_\psi = f_\phi \vee f_\psi$. This order may be carried over to the groups $\delta(\psi)$ by specifying $\delta(\psi) \leq \delta(\phi)$ if $f_\psi \leq f_\phi$. Then we have also $\delta(\phi \cdot \psi) = \delta(\phi) \vee \delta(\psi)$. So, $\Psi$ is embedded into a semigroup $\Psi^0$ which is a union of disjoint groups forming a join-semilattice.This order of groups is the exact mathematical counterpart of the domination relation introduced in \cite{jirousekshenoy14}. We remark that the first condition in the definition of a separative valuation algebra implies that 
\begin{eqnarray} \label{eq:DomProj}
\delta(\pi_x(\psi)) \leq \delta(\psi) \textrm{ for all}\ x \leq \d(\psi).
\end{eqnarray}
Note also that if $\delta(\psi) \leq \delta(\phi)$ then $\phi \cdot f_\psi  = \phi \cdot f_\phi \cdot f_\psi = \phi \cdot f_\phi = \phi$. 

Denote by $\Psi_x$ the set of all valuations with domain $x$, that is $d(\psi) = x$. It is a subsemigroup of $\Psi$ and
\begin{eqnarray*}
\Psi = \bigcup_{x \in D} \Psi_x.
\end{eqnarray*}
In some cases the semigroup $\Psi_x$ are already cancellative. Then we call the valuation algebra \textit{cancellative}. Every semigroup $\Psi_x$ is then embedded into a group $\delta(\psi)$ if $\psi \in \Psi_x$. This is for example that case of Gaussian densities (see next section). In other cases, all groups $\delta(\phi)$ belong entirely to $\Psi$, that is, $\Psi = \Psi^0$. Necessary and sufficient for this is that for all $\psi \in \Psi$ and all $x \leq d(\psi)$ there is a $\chi \in \Psi_x$ such that 
\begin{eqnarray*}
\psi = \psi \cdot \pi_x(\psi) \cdot \chi.
\end{eqnarray*}
Then the valuation algebra $\Psi$ is called \textit{regular}, since the semigroup $(\Psi,\cdot)$ is regular in the sense of semigroup theory \cite{cliffordpreston67}. An example of a regular valuation algebra is provided by probability potentials (see next section). 

In regular valuation algebras, for inverses as well as for idempotents, all projections do exist and belong to the algebra. This is not the case for general separative valuation algebras. We can however extend projection at least partially into the semigroup $(\Psi^0;\cdot)$. We assume here that the valuation algebra either has units satisfying Axiom A6 or else satisfies the strong combination Axiom A5'. If $d(\phi) = x$, $d(\psi) = y$ and $x \leq z \leq y \vee x$, then, in the first case,  by the Combination Axiom A5
\begin{eqnarray} \label{eq:ProjLemma}
\pi_z(\phi \cdot \psi) = \pi_z((\phi \cdot 1_z) \cdot \psi) = (\phi \cdot 1_z) \cdot \pi_{y \wedge z}(\psi) = \phi \cdot \pi_{y \wedge z}(\psi).
\end{eqnarray}
In the second case this result is Axiom A5'. We shall see that this enables us to extend projection beyond $\Psi$. 

First we extend labeling to $\Psi^0$. If $\eta \in \delta(\psi)$ for some $\psi \in \Psi$, then we define $d(\eta) = d(\psi)$. Clearly this is an extension of the labeling operation from $\Psi$ to $\Psi^0$. Consider $\eta \in \delta(\psi)$ and $\eta' \in \delta(\psi')$. Then $\eta \cdot \eta' \in \delta(\psi \cdot \psi') = \delta(\psi) \vee \delta(\psi')$. Hence it follows $d(\eta \cdot \eta') = d(\eta) \vee d(\eta')$. Therefore the Labeling Axiom A3 extends to all of $\Psi^0$.

Next we turn to projection. Remind that an element $\eta \in \Psi^0$ is an equivalence class $[\phi,\psi] = [\phi \cdot \phi,\phi] \cdot [\psi,\psi \cdot \psi] = \phi \cdot \psi^{-1}$. In many cases, $\psi$ is of the form $\psi' \cdot f$, where $f$ is an idempotent. This is for instance the case for conditionals, see Section \ref{sec:Conditionals} below. Then $d(\psi'),d(f) \leq d(\phi) = d(\psi)$ and also $\delta(\psi'),\delta(f) \leq \delta(\phi) = \delta(\psi)$. It follows that $\eta = \phi \cdot f \cdot \psi'^{-1} = \phi \cdot \psi'^{-1}$. Then we may define projection of $\eta$ for $d(\psi') \leq x \leq d(\phi)$ by
\begin{eqnarray} \label{eq:ExtProj}
\pi_x(\eta) = \pi_x(\phi) \cdot \psi'^{-1},
\end{eqnarray}
The representation of an element $\eta$ as $\phi \cdot \psi^{-1}$ is not unique. So, we must show that definition (\ref{eq:ExtProj}) is unambiguous. Therefore assume that $\eta = \phi \cdot \psi^{-1} = \phi' \cdot \psi'^{-1}$, where $d(\psi),d(\psi') \leq d(\phi) = d(\phi')$, $\delta(\eta) = \delta(\phi) = \delta(\phi')$ and $\delta(\psi),\delta(\psi') \leq \delta(\eta)$. We obtain then $\phi \cdot \psi' = \phi' \cdot \psi$. It follows for $x$ such that $d(\psi),d(\psi') \leq x \leq d(\eta)$, using (\ref{eq:ProjLemma})
\begin{eqnarray}
\pi_x(\phi) \cdot \psi' = \pi_x(\phi \cdot \psi') = \pi_x(\phi' \cdot \psi) = \pi_x(\phi') \cdot \psi,
\nonumber
\end{eqnarray}
since all elements involved belong to $\Psi$. Further, $\delta(\psi) \leq \delta(\phi)$ implies $\delta(\psi) \leq \delta(\pi_x(\phi))$ since $\delta(\phi) = \delta(\phi \cdot \psi)$, hence $\delta(\pi_x(\phi)) =  \delta(\pi_x(\phi \cdot \psi))= \delta(\pi_x(\phi) \cdot \psi) = \delta(\pi_x(\phi)) \vee \delta(\psi)$, and in the same way we conclude that $\delta(\psi') \leq \delta(\phi')$. From this we obtain
\begin{eqnarray}
\pi_x(\phi) \cdot \psi^{-1} = \pi_x(\phi') \cdot \psi'^{-1},
\nonumber
\end{eqnarray}
which shows that $\pi_x(\eta)$ is well defined for all $x \in D$ whenever there exist $\phi,\psi \inÊ\Psi$ such that $\eta = \phi \cdot \psi^{-1}$ with $d(\psi) \leq x \leq d(\phi)$ and $\delta(\psi) \leq \delta(\phi)$. Also, we have always $\pi_x(\eta) = \eta$ if $d(\eta) = x$. Finally, assume $\eta \in \Psi$, then for all $x \leq d(\eta)$, we have $\eta = \eta \cdot \pi_x(\eta) \cdot (\pi_x(\eta))^{-1}$. Then, by the new definition of projection of $\eta$ to $x$, $\pi_x(\eta \cdot \pi_x(\eta)) \cdot (\pi_x(\eta))^{-1} = \pi_x(\eta)$, where on the left is the projection as defined in $\Psi$. This shows that the new definition of projection is indeed an extension of the projection in $\Psi$.

It turns out that this extension of the projection operator still satisfies the Transitivity and Combination Axioms in $\Psi^0$.

\begin{theorem} \label{th:ExtProjTransComb}
If $(\Psi,D;\leq,d,\cdot,\pi)$ is a separative valuation algebra, satisfying either Axiom A6 or else Axiom A5', and $\pi : \Psi^0 \times D \rightarrow \Psi^0$ is the partially defined extension of projection to $\Psi^0$, then the following holds:
\begin{enumerate}
\item If $\pi_x(\eta)$ exists for $x \leq d(\eta)$, $\eta \in \Psi^0$ and $x \leq y \leq d(\eta)$, then
\begin{eqnarray}
\pi_x(\pi_y(\eta)) = \pi_x(\eta).
\end{eqnarray}
\item If $\eta_1,\eta_2 \in \Psi^0$ with $d(\eta_1) = x$, $d(\eta_2) = y$ and if $\pi_{x \wedge y}(\eta_2)$ exists, then $\pi_x(\eta_1 \cdot \eta_2)$ exists and
\begin{eqnarray}
\pi_x(\eta_1 \cdot \eta_2) = \eta_1 \cdot \pi_{x \wedge y}(\eta_2).
\end{eqnarray}
\end{enumerate}
\end{theorem}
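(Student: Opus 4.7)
The plan is to reduce both assertions to identities in $\Psi$ by choosing explicit quotient representations of the elements of $\Psi^0$ and invoking the definition (\ref{eq:ExtProj}) of the extended projection together with the well-definedness argument established just before the theorem. In each case the key is to exhibit $\eta$ (respectively $\eta_1 \cdot \eta_2$) in the form $\phi \cdot \psi^{-1}$ with $\phi,\psi \in \Psi$ meeting the domain and $\delta$-dominance conditions required by (\ref{eq:ExtProj}), and then to invoke the corresponding axiom in $\Psi$: A4 for transitivity and the identity (\ref{eq:ProjLemma}) for combination.

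For part~1, since $\pi_x(\eta)$ exists I first fix a representation $\eta = \phi \cdot \psi^{-1}$ with $d(\psi) \leq x \leq d(\phi)$ and $\delta(\psi) \leq \delta(\phi)$. Because $x \leq y \leq d(\phi)$ and $d(\psi) \leq y$, the same representation immediately shows that $\pi_y(\eta) = \pi_y(\phi) \cdot \psi^{-1}$ exists. The pair $(\pi_y(\phi),\psi)$ is itself a valid representation for projecting to $x$: one has $d(\pi_y(\phi)) = y \geq x \geq d(\psi)$, and $\delta(\psi) \leq \delta(\phi)$ forces $\delta(\psi) \leq \delta(\pi_y(\phi))$ by the observation used in the well-definedness argument. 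Applying the definition a second time and then A4 to $\phi \in \Psi$ yields $\pi_x(\pi_y(\eta)) = \pi_x(\pi_y(\phi)) \cdot \psi^{-1} = \pi_x(\phi) \cdot \psi^{-1} = \pi_x(\eta)$.

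For part~2, from the hypothesis that $\pi_{x\wedge y}(\eta_2)$ exists I fix a representation $\eta_2 = \phi_2 \cdot \psi_2^{-1}$ with $d(\phi_2) = y$, $d(\psi_2) \leq x \wedge y$ and $\delta(\psi_2) \leq \delta(\phi_2)$; likewise I choose any representation $\eta_1 = \phi_1 \cdot \psi_1^{-1}$ with $\phi_1, \psi_1$ in the common $\delta$-class of $\eta_1$ and with label $x$. Then
\[
\eta_1 \cdot \eta_2 = (\phi_1 \cdot \phi_2) \cdot (\psi_1 \cdot \psi_2)^{-1}.
\]
A short check confirms that this is a valid representation for projecting to $x$: the numerator has label $x \vee y$; the denominator has label $d(\psi_1) \vee d(\psi_2) \leq x \vee (x \wedge y) = x$; and $\delta(\psi_1 \cdot \psi_2) = \delta(\psi_1) \vee \delta(\psi_2) \leq \delta(\phi_1) \vee \delta(\phi_2) = \delta(\phi_1 \cdot \phi_2)$. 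Hence $\pi_x(\eta_1 \cdot \eta_2)$ exists and equals $\pi_x(\phi_1 \cdot \phi_2) \cdot (\psi_1 \cdot \psi_2)^{-1}$. The key step is then to apply (\ref{eq:ProjLemma}) (available because either A6 or A5' is assumed) with $d(\phi_1) = x$, $d(\phi_2) = y$ and $z = x$, yielding $\pi_x(\phi_1 \cdot \phi_2) = \phi_1 \cdot \pi_{x \wedge y}(\phi_2)$. Rearranging factors in the commutative semigroup $\Psi^0$ then gives $\pi_x(\eta_1 \cdot \eta_2) = (\phi_1 \cdot \psi_1^{-1}) \cdot (\pi_{x \wedge y}(\phi_2) \cdot \psi_2^{-1}) = \eta_1 \cdot \pi_{x \wedge y}(\eta_2)$.

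The only place care is needed is the bookkeeping of domain and $\delta$-dominance under composition: one must verify that the composed pair $(\phi_1 \cdot \phi_2,\, \psi_1 \cdot \psi_2)$ inherits the conditions required by (\ref{eq:ExtProj}), and that $\eta_1$ admits a representation with denominator labelled at most $x$. Once those points are settled the rest of the argument is a direct translation of the $\Psi^0$-identities into identities already available in $\Psi$ through A4 and (\ref{eq:ProjLemma}).
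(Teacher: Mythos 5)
Your proposal is correct and follows essentially the same route as the paper's own proof: pick quotient representations $\eta=\phi\cdot\psi^{-1}$ satisfying the domain and $\delta$-dominance conditions of (\ref{eq:ExtProj}), check that these conditions are inherited by $(\pi_y(\phi),\psi)$ in part~1 and by $(\phi_1\cdot\phi_2,\psi_1\cdot\psi_2)$ in part~2, and then reduce to Axiom A4 and to (\ref{eq:ProjLemma}) (equivalently A5) inside $\Psi$. The bookkeeping you flag as the delicate point is exactly the content of the paper's argument, and you carry it out correctly.
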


\begin{proof}
1.) Assume that $\eta = \phi \cdot \psi^{-1}$ with $d(\psi) \leq x \leq y \leq d(\eta) = d(\phi)$ and $\delta(\psi) \leq \delta(\phi)$. Then it follows as above that $\delta(\psi) \leq \delta(\pi_y(\phi))$. So, $\pi_y(\eta)$ exists too. In $\Psi$ we have $\pi_x(\phi) = \pi_x(\pi_y(\phi))$. Therefore $\pi_x(\pi_{y}(\eta)) = \pi_x(\pi_y(\phi)) \cdot \psi^{-1}) = \pi_x(\pi_y(\phi)) \cdot \psi^{-1} = \pi_x(\phi) \cdot \psi^{-1} = \pi_x(\eta)$.

2.) Assume $\eta_1 = \phi_1 \cdot \psi_1^{-1}$ and $\eta_2 = \phi_2 \cdot \psi_2^{-1}$, where $d(\psi_1) \leq d(\phi_1) = x$, $\delta(\psi_1) \leq \delta(\phi_1)$ and $d(\psi_2) \leq x \wedge y$ , $\delta(\psi_2) \leq \delta(\phi_2)$. Then
\begin{eqnarray}
\eta_1 \cdot \eta_2 = (\phi_1 \cdot \phi_2) \cdot (\psi_1 \cdot \psi_2)^{-1},
\nonumber
\end{eqnarray}
where $d(\psi_1 \cdot \psi_2) = d(\psi_1) \vee d(\psi_2) \leq x \leq d(\phi_1) \vee d(\phi_2)$ and $\delta(\psi_1 \cdot \psi_2) = \delta(\psi_1) \vee \delta(\psi_2) \leq \delta(\phi_1) \vee \delta(\phi_2) = \delta(\phi_1 \cdot \phi_2)$. Then we have by the Combination Axiom in $\Psi$,
\begin{eqnarray}
\pi_x(\eta_1 \cdot \eta_2) &=& \pi_x(\phi_1 \cdot \phi_2) \cdot (\psi_1 \cdot \psi_2)^{-1}
\nonumber \\
&=&\phi_1 \cdot \pi_{x \wedge y}(\phi_2) \cdot (\psi_1 \cdot \psi_2)^{-1}
\nonumber \\
&=&(\phi_1 \cdot \psi_1^{-1}) \cdot (\pi_{x \wedge y}(\phi_2) \cdot \psi_2^{-1})
\nonumber \\
&=&\eta_1 \cdot \pi_{x \wedge y}(\eta_2).
\nonumber
\end{eqnarray}
So, $\pi_x(\phi_1 \cdot \phi_2)$ exists and the combination axiom holds under these circumstances.
\end{proof}
This theory of partial projection is essential for generalising the formalism of conditional probability distributions to separative valuation algebras, and for introducing compositional operators, see Sections \ref{sec:Conditionals} and \ref{sec:CompOp}. Before we turn to these subjects, we illustrate the theory of separative valuation algebras by some examples.


\section{Examples of Separative Valuation Algebras} \label{sec:Exampl}

Here we present four examples of separative valuation algebras, where one of them is regular, another one cancellative and the remaining two neither regular nor cancellative. These examples should allow to better understand the abstract structure of a semigroup which is the union of disjoint groups and division as presented in the previous section.

We begin with a regular valuation algebra.

\begin{example} \textit{Probability Potentials:}

Let $(D;\leq)$ be the (distributive) lattice of finite subsets of a countable family of variables $X_i$, $i=1,2,\ldots$. Let $\Theta_i$ denote finite sets of possible values for the variables $X_i$. For a finite subset $s$ in $D$ define $\Theta_s$ to be the Cartesian product of the sets $\Theta_i$ for $i \in s$. The elements of $\Theta_s$ are $s$-tuples $x$ with components $x_i \in \Theta_i$ for $i \in s$. If $x$ is an $s$-tuple and $t \subseteq s$, then $x_t$ denotes the subtuple of $x$ of components $x_i$ with $i \in t$. For any finite set $s$ of variables, we consider functions $p : \Theta_s \rightarrow \mathbb{R}^+ \cup \{0\}$ of $s$-tuples into nonnegative real numbers. Let $\Psi_s$ be the set of all such functions on a set $s$ of variables and define
\begin{eqnarray*}
\Psi = \bigcup_{s \in D} \Psi_s.
\end{eqnarray*}
The functions $p$ may be normalised to the sum $1$, and represent then discrete probability distributions on the set $\Theta_s$. Therefore, we call the functions $p$ \textit{probability potentials}. Now, we define the operations of a valuation algebras as follows:
\begin{enumerate}
\item \textit{Labeling:} $d(p) = s$ if $p$ is a probability potential on $\Theta_s$.
\item \textit{Combination:} For probability potentials $p$ and $q$ with $d(p) = s$ and $d(q) = t$ and $x \in \Theta_{s \cup t}$,
\begin{eqnarray}
(p \cdot q)(x) = p(x_s)q(x_t).
\nonumber
\end{eqnarray}
\item \textit{Projection:} For a probability potential $p$ with $d(p) = s$, $t \subseteq s$, and $y \in \Theta_t$,
\begin{eqnarray}
(\pi_t(p))(y) = \sum_{x: x_t=y} p(x).
\nonumber
\end{eqnarray}
\end{enumerate}
This is an instance of a semiring-valuation algebra, namely for the semiring of nonnegative real numbers, see \cite{kohlaswilson08}, where many more examples of separative semiring-valuation algebras may be found. Let $supp(p) = \{x \in \Theta_s:p(x) > 0\}$ be the \textit{support} of a probability potential $p$ with $d(p) = s$. This valuation algebra of probability potential is \textit{regular}; any quotient $p(x)/q(x)$ of two probability potential with the same support sets $supp(p) = supp(q)$ is still a probability potential (assuming to vanish outside $supp(p)$). Hence we have $\Psi = \Psi^0$. The inverse of a probability potential $p$ is $p^{-1}(x) = 1/p(x)$ for $x \in supp(p)$ and equal to zero elsewhere. The idempotents in groups $\delta(p)$ are the functions $f_p(x) = 1$ if $x \in supp(p)$ and vanishing elsewhere on $\Theta_s$. The groups $\delta(p)$ are formed by the potentials with the same support $supp(p)$ and the lattice of theses groups is closely related to the lattice of the support sets $supp(p) \subseteq \Theta_s$. The valuation algebra of probability potentials is at the base of local computation schemes for Bayesian or more general probabilistic networks. As a regular valuation algebra it allows for local computation with division \cite{lauritzenjensen97,kohlas03}.
\end{example}

The next example is also related to probability theory, it is however only separative, but neither regular nor cancellative.

\begin{example} \textit{Density Functions:}

This example is based on domains formed by sets of real-valued variables. Let $(D;\subseteq)$ be the lattice of finite subsets of $\omega = \{1,2\ldots\}$. We consider here the linear vector spaces $\mathbb{R}^s$ of real valued tuples $x : s \rightarrow \mathbb{R}$ where $s$ is a finite subset of $\omega$. On a space $\mathbb{R}^s$ we consider nonnegative functions $f = \mathbb{R}^s \rightarrow \mathbb{R}^+ \cup \{0\}$, whose integrals 
\begin{eqnarray}
\int_{- \infty}^{+ \infty} f(x) dx
\end{eqnarray}
exist and are finite. To simplify, we consider continuous functions and Rieman integrals; it would also be possible to consider measurable functions and Lebesgue integrals \cite{kohlas03}. Such functions are called \textit{density functions} or shortly \textit{densities} on $\mathbb{R}^s$.  By normalisation a density can be become a continuous density function in the sense of probability theory. We define the operations of a valuation algebra for densities as follows:
\begin{enumerate}
\item \textit{Labeling:} $d(f) = s$ if $f$ is a density on $\mathbb{R}^s$.
\item \textit{Combination:} For densities $f$ and $g$ with $d(f) = s$ and $d(g) = t$ and $x \in \mathbb{R}^{s \cup t}$,
\begin{eqnarray}
(f \cdot g)(x) = f(x_s)g(x_t),
\nonumber
\end{eqnarray}
where $x_s$ and $x_t$ denote the subtuples of components of $x$ in $s$ and $t$ respectively.
\item \textit{Projection:} For a density $f$ with $d(f) = s$, $t \subseteq s$, and $x \in \mathbb{R}^s$,
\begin{eqnarray}
(\pi_t(f))(x_t) = \int_{- \infty}^{+ \infty} f(x_t,x_{s-t}) dx_{s -t}.
\nonumber
\end{eqnarray}
\end{enumerate}
It is straightforward to verify the axioms of a valuation algebra for this system. There are no unit elements, since the function $e(x) = 1$ for all $x \in \mathbb{R}^s$ is not finitely integrable, hence not a density.  However, the strong Combination Axiom A5' is satisfied for densities. Further there are null elements $0_s(x) = 0$ for all $x \in \mathbb{R}^s$. Note that the combination operation of densities has no obvious sense in terms of classical probability theory. Still, this valuation algebra is important for local computation with factorisation of a density into a product of conditional densities, similar to the case of probability densities.

This valuation algebra is separative. In fact, define for two densities $f$ and $g$ the relation $f \equiv_\delta g$ if $f(x) > 0 \Leftrightarrow g(x) > 0$. Define for a density $f$ on $\mathbb{R}^s$ the support set $supp(f) = \{x \in \mathbb{R}^s:f(x) > 0\}$. Then two densities are equivalent, exactly if they have the same support sets. So, the equivalence classes $[f]_\delta$ consist of all densities with the same support set $supp(f)$. They form clearly a subsemigroup of the semigroup of all densities. The null elements have empty support sets and each $0_s$ forms by itself a semigroup, in fact already a (trivial) group. Note that the density $f \cdot \pi_t(f)$ has the same support as $f$ and that the semigroup $[f]_\delta$ is cancellative. The semigroup $[f]_\delta$ is therefore embedded into the group $\delta(f)$ of all quotients of densities $h(x)/g(x)$ with support sets equal to $supp(f)$. Here we take the liberty to replace the equivalence classes of pairs $[f,g]$ by the equivalent quotients. The unit of this group $\delta(f)$ is the function $e_f(x) = 1$ if $x \in supp(f)$ and vanishing elsewhere. If $supp(f)$ is not of finite measure, then $e_f$ is not a density. The embedding is by the map $f \mapsto f/e_f = f \cdot f/f$. The inverse of a density $f$ is then the function $f^{-1}(x) = 1/f(x)$ if $x \in supp(f)$ and vanishing elsewhere. This function is in general no more finitely integrable, hence no more a density. So, the valuation algebra of densities is not regular. It is also not cancellative. Density functions provide in some sense the model for general separative valuation algebras.
\end{example}

Here follows an example of a cancellative valuation algebra.

\begin{example} \textit{Gaussian Potentials:}

Gaussian densities are of particular interest in applications. A multivariate Gaussian density over a set $s$  of variables is defined by
\begin{eqnarray}
f(x) = (2\pi)^{-n/2} (det\ \Sigma)^{-1/2}e^{-(1/2)(x - \mu)'\Sigma^{-1}(x - \mu)}.
\nonumber
\end{eqnarray}
Here $\mu$ is a vector in $\mathbb{R}^s$ (see previous example), and $\Sigma$ is a symmetric positive definite matrix in $\mathbb{R}^s \times \mathbb{R}^s$. The vector $\mu$ is the expected value vector of the density and $\Sigma$ the variance-covariance matrix. The matrix $\mathbf{K} = \Sigma^{-1}$ is called the concentration matrix of the density. It is also symmetric and positive definite. A Gaussian density may be represented or determined by the pair $(\mu,\mathbf{K})$. Each such pair with $\mu$ an $s$-vector and $\mathbf{K}$ a symmetric positive definite $s \times s$ matrix determines a Gaussian density. Gaussian densities belong to the valuation algebra of densities defined in the previous example. In fact, they form a subalgebra of the algebra of densities. Labeling, combination and projection can however now  be expressed in terms of the pairs $(\mu,\mathbf{K})$. For this purpose, if $t \supseteq s$ and $\mu$ is an $s$-vector, $\mathbf{K}$ a $s \times s$ matrix let $\mu^{\uparrow t}$ and $\mathbf{K}^{\uparrow t}$ be the vector or matrix obtained by adding to $\mu$ and $\mathbf{K}$ $0$-entries for all indices in $t - s$. Further, if $t \subseteq s$, then let $\mu_t$ and $\mathbf{K}_{t,t}$ be the subvector or submatrix of $\mu$ and $\mathbf{K}$ respectively with components in $t$.

We then define the following operations on pairs $(\mu,\mathbf{K})$:
\begin{enumerate}
\item \textit{Labeling:} $d(\mu,\mathbf{K}) = s$ if $\mu$ is a $s$-vector and $\mathbf{K}$ a $s \times s$ matrix.
\item \textit{Combination:} For pairs $(\mu_1,\mathbf{K}_1)$ and $(\mu_2,\mathbf{K}_2)$ with $d(\mu_1,\mathbf{K}_1) = s$ and $d(\mu_2,\mathbf{K}_2) = t$,
\begin{eqnarray}
(\mu_1,\mathbf{K}_1) \cdot (\mu_2,\mathbf{K}_2) = (\mu,\mathbf{K})
\nonumber
\end{eqnarray}
with
\begin{eqnarray}
\mathbf{K} = \mathbf{K}_1^{\uparrow s \cup t} + \mathbf{K}_2^{\uparrow s \cup t}
\nonumber
\end{eqnarray}
and 
\begin{eqnarray}
\mu = \mathbf{K}^{-1} \left( \mathbf{K}_1^{\uparrow s \cup t} \mu_1^{\uparrow s \cup t} + 
\mathbf{K}_2^{\uparrow s \cup t} \mu_2^{\uparrow s \cup t} \right).
\nonumber
\end{eqnarray}
\item \textit{Projection:} For a pair $(\mu,\mathbf{K})$ with $d(\mu,\mathbf{K}) = s$, $t \subseteq s$,
\begin{eqnarray}
\pi_t(\mu,\mathbf{K}) = (\mu_t,((\mathbf{K}^{-1})_{t,t})^{-1}).
\nonumber
\end{eqnarray}
\end{enumerate}
This is justified by the fact, that the combination of two Gaussian densities as densities results again in a Gaussian density, and so does projection of Gaussian density. We refer to \cite{kohlas03} for more details. Again, the algebra of Gaussian potentials has no unit elements, but satisfies the strong Combination Axiom A5'. For an application of this valuation algebra to linear systems with Gaussian disturbances we refer to \cite{poulykohlas11}.

This valuation algebra is cancellative: Let $(\mu,\mathbf{K})$, $(\mu_1,\mathbf{K}_1)$ and $(\mu_2,\mathbf{K}_2)$ represent Gaussian densities such that $(\mu,\mathbf{K}) \cdot (\mu_1,\mathbf{K}_1) = (\mu,\mathbf{K}) \cdot (\mu_2,\mathbf{K}_2)$. Then $\mathbf{K} + \mathbf{K}_1 = \mathbf{K} + \mathbf{K}_2$, hence $\mathbf{K}_1 = \mathbf{K}_2$ and 
\begin{eqnarray}
(\mathbf{K} + \mathbf{K}_1)^{-1}(\mathbf{K}\mu + \mathbf{K}_1\mu_1) = (\mathbf{K} + \mathbf{K}_2)^{-1}(\mathbf{K}\mu + \mathbf{K}_2\mu_1)
\end{eqnarray}
implies then, in view of the above, that $\mu_1 = \mu_2$. Therefore, the valuation algebra of Gaussian densities, as the algebra of densities, is embedded into the semigroup of the union of groups of quotients of Gaussian densities on $\mathbb{R}^s$ for all finite subsets $s$ of variables. The units of these groups are the functions $e_s(x) = 1$ for all $x \in \mathbb{R}^s$. For instance, conditional Gaussian densities $g(x)/\pi_t(g)(x_t) = (g(x)/e_s(x))(\pi_t(g)(x_t)/e_t(x)$ belong to this semigroup (see the next Section for conditionals in general). 
\end{example}

Next, belief functions provide another example of a separative valuation algebra.

\begin{example} \textit{Belief Functions:}

Here we take for the lattice $(D,\leq)$ any sublattice of partitions of some universe $U$ with a \textit{finite} number of blocks. The usual model in this context considers finite sets of variables with each variable having a finite set of values (like in the case of probability potentials above). This is a special case of our more general frame, which corresponds more to the framework considered in \cite{shafer76}, the original source of belief functions, see also \cite{shafershenoymellouli97,kohlasmonney95}. We summarize some elementary facts about partitions. There is a partial order between partitions defined by $\mathcal{P}_1 \leq \mathcal{P}_2$, if every block of $\mathcal{P}_2$ is contained in a block of $\mathcal{P}_1$, that is $\mathcal{P}_2$ is finer than $\mathcal{P}_1$. The join of two partitions $\mathcal{P}_1$ and $\mathcal{P}_2$ is the partition $\mathcal{P}$ whose blocks are exactly the non-empty intersections of a block of $\mathcal{P}_1$ with one of $\mathcal{P}_2$. The meet is bit more involved, but we do not need to enter into details (see \cite{graetzer78}, but note that there the opposite order is used). It is convenient to associate to a partition $\mathcal{P}$ the set $\Theta_\mathcal{P}$ of its blocks. We call this the frame of the partition, and we define an order between frames by $\Theta_{\mathcal{P}_1} \leq \Theta_{\mathcal{P}_2}$ if and only if $\mathcal{P}_1 \leq \mathcal{P}_2$. The family of frames becomes then a lattice just as the original lattice of partitions. In the following we consider the lattice $(D;\leq)$ of frames rather than the corresponding lattice of partitions. If $\Theta$ and $\Lambda$ are two frames such that $\Theta \leq \Lambda$, then 
we define a map of subsets of $\Theta$ to subsets of $\Lambda$ by
\begin{eqnarray*}
\tau_\Lambda(S) = \{\lambda \in \Lambda:\lambda \subseteq \theta \textrm{ for some}\ \theta \in S\},
\end{eqnarray*}
for $S \subseteq \Theta$. This is called a refining of $\Theta$. In the other direction, we define a map
\begin{eqnarray*}
v_\Theta(T) = \{\theta \in \Theta: \tau_\Lambda(\{\theta\}) \cap T \not= \emptyset\}
\end{eqnarray*}
for $T \subseteq \Lambda$.

Now we are ready to define belief functions on frames (or equivalently on partitions). Consider $(D,\leq)$ to be the lattice of frames associated with the original lattice of partitions. For any frame $\Theta \in D$ consider functions $m$ from the \textit{power set} $2^\Theta$ to nonnegative real numbers and let $\Psi_\Theta$ be the set of all such functions. Define 
\begin{eqnarray*}
\Psi = \bigcup_{\Theta \in D} \Psi_\Theta.
\end{eqnarray*}
This corresponds to basic probability assignments in Dempster-Shafer theory \cite{shafer76}, except that there the functions $m$ are only defined for non-empty subsets and are normalised such that the sum of all $m(S)$ equals one. But as with probability potentials, these side conditions may be neglected for our purposes. We call $m$ mass functions. We define then the following operations:
\begin{enumerate}
\item \textit{Labeling:} $d(m) = \Theta$ if $m$ is defined on frame $\Theta$,
\item \textit{Combination:} If $d(m_1) = \Theta$ and $d(m_2) = \Lambda$, $S \subseteq \Theta \vee \Lambda$, then $m = m_1 \cdot m_2$ is defined by
\begin{eqnarray*}
m(S) = \sum \{m_1(S_1)m_2(S_2):S_1 \subseteq \Theta,S_2 \subseteq \Lambda,v_{\Theta \vee \Lambda}(S_1) \cap v_{\Theta \vee \Lambda}(S_2) = S\}
\end{eqnarray*}
\item \textit{Projection:} If $d(m) = \Lambda$ and $\Theta \leq \Lambda$, and $S \subseteq \Theta$, then $\pi_\Theta(m)$ is defined by
\begin{eqnarray*}
\pi_\Theta(S) = \sum \{m(T);T \subseteq \Lambda,\tau_\Lambda(T) = S\}.
\end{eqnarray*}
\end{enumerate}
The combination is essentially Dempster's rule \cite{shafer76}, except that normalisation is missing. This defines a valuation algebra, see for instance \cite{kohlas03,poulykohlas11,kohlas16}. This algebra has for every frame $\Theta \in D$ a unit element defined by $1_\Theta(\Theta) = 1$ and $1_\Theta(S) = 0$ for any proper subset $S$ of $\Theta$. These unit elements satisfy Axiom A6. It has also null elements $0_\Theta(S) = 0$ for all subsets $S$ of $\Theta$.

To any mass function $m$, we can associate two other set functions by its Moebius transforms,
\begin{eqnarray*}
b(S) = \sum_{T \subseteq S} m(T), \quad q(S) = \sum_{T \supseteq S} m(T).
\end{eqnarray*}
The function $b$ is called \textit{belief function} and $q$ \textit{commonality function}. There is a one-to-one correspondence between mass, belief and commonality functions. In fact, we have \cite{shafer76}
\begin{eqnarray*}
m(S) = \sum_{T \subseteq S} (-1)^{\vert S - T \vert} b(T) = \sum_{T \supseteq S} (-1)^{\vert T - S \vert}.
\end{eqnarray*}
The crucial point is that if $q_1$ and $q_2$ are the commonality functions corresponding to the mass functions $m_1$ and $m_2$ with $d(m_1) = \Theta$ and $d(m_2) = \Lambda$, then the commonality function $q$, corresponding to the combined mass function $m_1 \cdot m_2$ is defined for any subset $S$ of $\Theta \vee \lambda$ by
\begin{eqnarray*}
q(S) = q_1(v_\Theta(S))q_2(v_\Lambda(S)),
\end{eqnarray*}
see \cite{shafer76,kohlas03}.

The valuation algebra of mass functions is separative. In fact we use commonality functions to define a congruence $q_1 \equiv_\delta q_2$ if $q_1(S) > 0 \Leftrightarrow q_2(S) > 0$. If we denote by $\pi_\Lambda(q)$ the commonality function of $\pi_\Lambda(m)$, and by $q_1 \cdot q_2$ the commonality function of $m_1 \cdot m_2$, then it can be seen that $q(S) > 0$ implies $\pi_\Lambda(q)(\tau(S)) > 0$ and therefore $q \cdot \pi_\Lambda(q) \equiv_\delta q$. Define $supp(q) = \{S \subseteq \Theta:q(S) > 0\}$ if $q$ is a commonality function on frame $\Theta$. The subsemigroup of commonality functions $[q]_\delta$ with support $supp(q)$ is clearly cancellative. So, the valuation algebra of mass functions is indeed separative. The groups $\delta(q)$ consist of quotients $q_1(S)/q_2(S)$ if $S \in supp(q)$ and zero otherwise, for two commonality functions with the same support $supp(q)$. The unit of group $\delta(q)$ is the function $f_q(S) =1$ for $S \in supp(q)$ and $f_q(S) = 0$ otherwise. The inverse of commonality $q$ is simply $1/q(S)$ for $S \in supp(q)$ and zero otherwise. Although the quotients $q_1/q_2$ as well as the units $1_q$ are nonnegative functions, its Moebius transforms are no more nonnegative mass functions in general. This means that units and inverses and quotients $q_1/q_2$ in general do not belong to the valuation algebra. Therefore the valuation algebra is not regular, and it is not cancellative either.
\end{example}

Further examples of separative valuation algebras may be found in \cite{poulykohlas11} as well as in \cite{kohlaswilson06}.


\section{Conditionals} \label{sec:Conditionals}

In probability theory, \textit{conditioning} and \textit{conditional distributions} play an important role as well as independence and \textit{conditional independence}. This applies equally to modeling and to computational purposes. We claim that these concepts are not limited to probability, but concern more generally \textit{information} in a wider context. Therefore, we examine generalisations in this section in the realm of separative valuation algebras, because conditioning presupposes a concept of division.

We assume throughout this section $(\Psi,D;\leq,d,\cdot,\pi)$ to be a \textit{separative valuation algebra}, In addition, we assume either that the valuation algebra has unit elements satisfying axioms A6 or else that the extended combination axiom A5' is satisfied . This guarantees that partial projection in $\Psi^0$ is well defined. Note that the theory developed in this section covers all examples in the previous section.

We define the concept of a conditional, following the pattern of probability distributions. The results presented in this section were already exposed in \cite{kohlas03}, however only in the multvariate setting. The results extend easily to the more general case of lattices $(D;\leq)$ of domains.

\begin{definition} \label{def:Conditional} \textit{Conditional:}
Let $(\Psi,D;\leq,d,\cdot,\pi)$ be a separative valuation algebra. For an element $\phi \in \Psi$, and $y \leq x \leq d(\phi)$, 
\begin{eqnarray}
\phi_{x \vert y} = \pi_x(\phi) \cdot (\pi_y(\phi))^{-1}
\end{eqnarray}
is called a conditional of $\phi$ for $x$ given $y$.
\end{definition}

Note that a conditional $\phi_{x \vert y}$ does, in general, not belong to $\Psi$, but only to $\Psi^0$, except if the valuation algebra is regular. In this case a conditional $\phi_{x \vert y}$ can be projected to all domains $z \leq x$, whereas in general for a conditional $\phi_{x \vert y}$ projections exist only for $z$ such that $y \leq z \leq x$, since $d(\phi_{x \vert y}) = x$ and $\delta(\pi_y(\phi)) \leq \delta(\pi_x(\phi))$. When we consider a conditional $\phi_{x \vert y}$, then we assume always implicitly that $y \leq x \leq d(\phi)$.
. Further, it follows from the definition that
\begin{eqnarray}
\pi_x(\phi) = \phi_{x \vert y} \cdot \pi_y(\phi)
\end{eqnarray}
since $\delta(\pi_y(\phi)) = \delta(\pi_y(\pi_x(\phi))) \leq \delta(\pi_x(\phi))$ by (\ref{eq:DomProj}). For this reason conditionals  $\phi_{x \vert y}$ were also called \textit{continuers} of $\phi$ from $y$ to $x$ in \cite{shafer96}, or we say that $\phi_{x \vert y}$ continues $\phi$ from $y$ to $x$. We have also $\delta(\phi_{x \vert y}) = \delta(\pi_x(\phi) \cdot (\pi_y(\phi))^{-1}) = \delta(\pi_x(\phi)) \vee \delta(\pi_y(\phi)) = \delta(\pi_x(\phi))$, so that  
\begin{eqnarray} \label{eq:DomOfCondOverProj}
\delta(\pi_y(\phi)) \leq \delta(\phi_{x \vert y}).
\end{eqnarray}

Here follow a few elementary results about conditionals.

\begin{lemma} \label{le:PropOfCond}
Let $(\Psi,D;\leq,d,\cdot,\pi)$ be a separative valuation algebra satisfying the additional assumptions about units or the extended combination axiom stated above. Then the following holds:
\begin{enumerate}
\item $\pi_y(\phi_{x \vert y}) = f_{\pi_y(\phi)}$.
\item If $z \leq y \leq x$, then $\phi_{x \vert z} = \phi_{x \vert y} \cdot \phi_{y \vert z}$.
\item If $z \leq y \leq x$, then $\pi_y(\phi_{x \vert z}) = \phi_{y \vert z}$.
\item If $d(\psi) = y \leq x$, then $(\pi_x(\phi) \cdot \psi)_{x \vert y} = \phi_{x \vert y} \cdot f_\psi$.
\item If $z \leq y \leq x$ and $z \leq w \leq x$ then $\pi_w(\phi_{x \vert y} \cdot \phi_{y \vert z}) = \phi_{w \vert z}$.
\end{enumerate}
\end{lemma}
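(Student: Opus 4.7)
I would prove the five items in turn, leaning almost entirely on the extended transitivity and combination clauses of Theorem~\ref{th:ExtProjTransComb}, together with two elementary absorption facts already recorded in the text: $\chi \cdot f_\psi = \chi$ whenever $\delta(\psi) \leq \delta(\chi)$, and $\delta(\pi_y(\phi)) \leq \delta(\pi_x(\phi))$ whenever $y \leq x \leq d(\phi)$ (which follows from transitivity of projection together with~(\ref{eq:DomProj})). Each item then becomes a calculation that consists of pushing a projection through a product in $\Psi^0$ and then cancelling an idempotent factor $f_{\pi_y(\phi)}$.

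For item~1 I write $\phi_{x \vert y} = (\pi_y(\phi))^{-1} \cdot \pi_x(\phi)$ and apply the combination clause of Theorem~\ref{th:ExtProjTransComb} with $\eta_1 = (\pi_y(\phi))^{-1}$ (of domain $y$) and $\eta_2 = \pi_x(\phi)$ (of domain $x$); extended transitivity collapses $\pi_{y \wedge x}(\pi_x(\phi)) = \pi_y(\phi)$, and $(\pi_y(\phi))^{-1} \cdot \pi_y(\phi) = f_{\pi_y(\phi)}$ by definition of the group $\delta(\pi_y(\phi))$. Item~2 is a direct expansion: the middle factors $(\pi_y(\phi))^{-1} \cdot \pi_y(\phi)$ yield $f_{\pi_y(\phi)}$, and since $\delta(\pi_y(\phi)) \leq \delta(\pi_x(\phi))$ by the absorption fact above, this idempotent is swallowed by $\pi_x(\phi)$. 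Item~3 then follows by combining items~1 and~2: rewriting $\phi_{x \vert z} = \phi_{x \vert y} \cdot \phi_{y \vert z}$ and projecting to $y$ via Theorem~\ref{th:ExtProjTransComb} (with $\phi_{y \vert z}$ as the $y$-domain factor) produces $\phi_{y \vert z} \cdot \pi_y(\phi_{x \vert y}) = \phi_{y \vert z} \cdot f_{\pi_y(\phi)}$, and~(\ref{eq:DomOfCondOverProj}) lets the idempotent be absorbed into $\phi_{y \vert z}$.

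Item~4 is a direct computation: since $y \leq x$, we have $d(\pi_x(\phi) \cdot \psi) = x$, so the $x$-projection is the element itself, while the combination clause applied with $\psi$ as the $y$-factor gives $\pi_y(\pi_x(\phi) \cdot \psi) = \psi \cdot \pi_y(\phi)$; forming the quotient and using commutativity yields $\pi_x(\phi) \cdot \psi \cdot \psi^{-1} \cdot (\pi_y(\phi))^{-1} = \phi_{x \vert y} \cdot f_\psi$. Item~5 is then immediate: the product equals $\phi_{x \vert z}$ by item~2, and item~3 applied with $w$ in place of $y$ (the hypothesis $z \leq w \leq x$ is precisely what is needed) gives $\pi_w(\phi_{x \vert z}) = \phi_{w \vert z}$. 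The main obstacle is not depth but bookkeeping: every invocation of Theorem~\ref{th:ExtProjTransComb} requires first checking that the relevant partial projection in $\Psi^0$ exists, i.e.\ that the numerator and denominator domains and $\delta$-classes satisfy the bounds $d(\psi) \leq x \leq d(\phi)$ and $\delta(\psi) \leq \delta(\phi)$ built into~(\ref{eq:ExtProj}); the roles of $\eta_1$ and $\eta_2$ must be assigned so that the factor whose domain is being projected appears as $\eta_2$, and one must keep track of which $\delta$-class absorbs which idempotent at each step.
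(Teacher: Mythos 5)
Your proposal is correct and follows essentially the same strategy as the paper: expand the conditionals by definition, push projections through products via Theorem~\ref{th:ExtProjTransComb}, and absorb the resulting idempotents $f_{\pi_y(\phi)}$ using the order on the groups $\delta(\cdot)$. Items 1, 2 and 5 coincide with the paper's argument; item 3 you derive from items 1 and 2, where the paper computes it directly from the definition of extended projection --- an immaterial difference. The only genuinely different step is item 4: the paper equates the two continuation factorisations $\pi_x(\phi)\cdot\psi = \phi_{x\vert y}\cdot(\pi_y(\phi)\cdot\psi) = (\pi_x(\phi)\cdot\psi)_{x\vert y}\cdot(\pi_y(\phi)\cdot\psi)$ and cancels by multiplying with $(\pi_y(\phi)\cdot\psi)^{-1}$, whereas you compute the quotient $(\pi_x(\phi)\cdot\psi)\cdot(\psi\cdot\pi_y(\phi))^{-1}$ directly, using that inverses distribute over products in the union of groups $\Psi^0$ (i.e.\ $(\eta\cdot\chi)^{-1}=\eta^{-1}\cdot\chi^{-1}$, which does hold in a Clifford semigroup and is used implicitly elsewhere in the paper). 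Your version is slightly shorter and equally valid.
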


\begin{proof}
1.) By definition, by transitivity of projection and the extended combination axiom (Theorem \ref{th:ExtProjTransComb}), $\pi_y(\phi_{x \vert y} )= \pi_y(\pi_x(\phi) \cdot (\pi_y(\phi))^{-1})= \pi_y(\phi) \cdot (\pi_y(\phi))^{-1} = f_{\pi_y(\phi)}$.

2.) Again by definition $\phi_{x \vert y} \cdot \phi_{y \vert z} = \pi_x(\phi) \cdot (\pi_y(\phi))^{-1} \cdot \pi_y(\phi) \cdot (\pi_z(\phi))^{-1} = \pi_x(\phi) \cdot f_{\pi_y(\phi)} \cdot (\pi_z(\phi))^{-1} = \phi_{x \vert z}$ since $\delta(\pi_y(\phi)) \leq \delta(\pi_x(\phi))$. 

3.) Here we have again, using Theorem \ref{th:ExtProjTransComb}, $\pi_y(\phi_{x \vert z}) = \pi_y(\pi_x(\phi) \cdot (\pi_z(\phi))^{-1}) = \pi_y(\pi_x(\phi) \cdot f_{\pi_y(\phi)} \cdot (\pi_z(\phi))^{-1}) = \pi_y(\pi_x(\phi)) \cdot f_{\pi_y(\phi)} \cdot (\pi_z(\phi))^{-1}
= \pi_y(\pi_x(\phi)) \cdot (\pi_z(\phi))^{-1} = \pi_y(\phi) \cdot (\pi_z(\phi))^{-1} = \phi_{y \vert z}$.

4.) On the one hand, we have $\pi_x(\phi) \cdot \psi = \phi_{x \vert y} \cdot \pi_y(\phi) \cdot \psi$ since $\phi_{x \vert y}$ continues $\phi$ from $y$ to $x$. On the other hand we have also $\pi_x(\phi) \cdot \psi = (\pi_x(\phi) \cdot \psi)_{x \vert y} \cdot \pi_y(\pi_x(\phi) \cdot \psi) = (\pi_x(\phi) \cdot \psi)_{x \vert y} \cdot \pi_y(\phi) \cdot \psi$, again using the continuation property of a conditional. This leads to the equation
\begin{eqnarray}
\phi_{x \vert y} \cdot (\pi_y(\phi) \cdot \psi)= (\pi_x(\phi) \cdot \psi)_{x \vert y} \cdot (\pi_y(\phi) \cdot \psi.)
\nonumber
\end{eqnarray}
Multiplying both sides with the inverse of $\pi_y(\phi) \cdot \psi$, we get
\begin{eqnarray}
\phi_{x \vert y} \cdot f_{\pi_x(\phi) \cdot \psi} 
= (\pi_x(\phi) \cdot \psi)_{x \vert y} \cdot f_{\pi_x(\phi) \cdot \psi}.
\nonumber
\end{eqnarray}
By (\ref{eq:DomOfCondOverProj}) we have  $\delta(\pi_y(\phi) \cdot \psi) \leq \delta((\pi_x(\phi) \cdot \psi)_{x \vert y})$. Then it follows that 
\begin{eqnarray}
(\pi_x(\phi) \cdot \psi)_{x \vert y} = \phi_{x \vert y} \cdot f_{\pi_y(\phi) \cdot \psi} 
= \phi_{x \vert y} \cdot f_{\pi_y(\phi)} \cdot f_\psi = \phi_{x \vert y} \cdot f_\psi
\nonumber
\end{eqnarray}
where the last equality follows again from (\ref{eq:DomOfCondOverProj}).

5.) By item 2 proved above, $\phi_{x \vert y} \cdot \phi_{y \vert z} = \phi_{x \vert z}$ and since $z \leq w \leq x$ it follows from item 3 above that $\pi_w(\phi_{x \vert z}) = \phi_{w \vert z}$.
\end{proof}

If we consider briefly conditionals in the examples of the previous section, then we remark that in the case of probability potentials, density and Gaussian potentials conditionals in  the sense of Definition \ref{def:Conditional} correspond essentially (up to normalisation) to conditional discrete probability distributions, conditional densities and conditional Gaussian densities. Lemma \ref{le:PropOfCond} corresponds in these cases to well-known results of probability theory. In the case of probability potentials conditionals are themselves probability potentials. This is not the case for densities, they belong in general only to the extending semigroup $\Psi^0$. The case of belief functions is less usual.

After these preparations, we are ready to turn to the discussion of compositional operators in separative valuation algebras.


\section{Compositional Operators} \label{sec:CompOp}

Compositional models have been introduced as an alternative to Bayesian networks in \cite{jirousek97,jirousek11}. Later these models were extended for possibility theory \cite{vejnarova98} and for Dempster-Shafer theory \cite{jirousekvejn07}. Finally it was noted that compositional models can be formed and used, under some conditions, in valuation based systems \cite{jirousekshenoy14,jirousekshenoy15}. In the last two references, an axiomatic system essentially equivalent to regular valuation algebras are assumed, which excludes for instance compositional models of belief functions. We show here that in fact compositional models are possible in \textit{separative} valuation algebras. This allows then, among other instances of valuation algebras like Gaussian densities, also to include belief functions into the framework of compositional models.

Let then $(\Psi,D;\leq,d,\cdot,\pi)$ throughout this section be a separative valuation algebra having units satisfying Axiom A6 or else the extended Combination Axiom S5', so that partial projection in the algebra is well defined. We introduce first the compositional operator and prove then two central theorems (Theorems \ref{th:MainCompTheorem1} and \ref{th:MainCompTheorem2}) underlying the theory of compositional models. This shows then that results obtained in \cite{jirousek97,jirousek11} and \cite{jirousekshenoy14,jirousekshenoy15} apply to separative valuation algebras.  However, these two theorems are not valid for general lattices. The first one holds for modular and the second one for distributive lattices. A lattice is modular, if  $z \leq y$ implies 
\begin{eqnarray*}
y \wedge (x \vee z) = (x \wedge y) \vee z.
\end{eqnarray*}
A lattice is distributive, if 
\begin{eqnarray*}
x \wedge (y \vee z) = (x \wedge y) \vee (x \wedge z).
\end{eqnarray*}
A distributive lattice is also modular \cite{daveypriestley97}.

\begin{definition} \label{def:CompOp}
If $\phi,\psi \in \Psi^0$ with $d(\phi) = x$, $d(\psi) = y$, such that $\pi_{x \wedge y}(\phi$ and $\pi_{x \wedge y}(\psi)$ both exist, and further $\delta(\pi_{x \wedge y}(\psi)) \leq \delta(\pi_{x \wedge y}(\phi))$ , define
\begin{eqnarray}
\phi \rhd \psi = \phi \cdot \psi_{y \vert x \wedge y} = \phi \cdot \psi \cdot (\pi_{x \wedge y}(\psi))^{-1}.
\end{eqnarray}
This is called a composition of $\phi$ and $\psi$ and $\rhd$ is called the compositional operator.
\end{definition}

Note that $\phi\ \rhd \psi$ exists for all elements in  $\Psi = \Psi_0$, if the valuation algebra is \textit{regular} and $\Psi$ is closed under composition. If the valuation algebra is not regular, but only separative, the situation is a bit more involved. We assume then that the domains in $D$ have a least element $\bot$. We call an element $\psi$ of $\Psi_0$ an \textit{abstract density}, if all projections $\pi_x(\psi)$ for $x \leq d(\psi)$ exist, which is equivalent to assume that the projection to the least element $\pi_\bot(\psi)$ exists. Note that all elements of $\Psi$ are then abstract densities. Further in the case of (usual) density functions and Gaussian densities, the elements of $\Psi$ are the only abstract densities. In the case of a general separative valuation algebra it is an open question whether there are elements not in $\Psi$ which are (abstract) densities. Let $\Psi-d$ denote the family of abstract densities in the separative valuation algebra $(\Psi,D;\leq,d,\cdot,\pi)$. Clearly, $\Psi_d$ is closed under projection. We claim that it is also closed under composition.

\begin{theorem} \label{th:Composition}
Let $(\Psi,D;\leq,d,\cdot,\pi)$ be a separative valuation algebra satisfying the additional assumptions about units or extended combination axiom stated above. If $\phi$ and $\psi$ are abstract densities such that $\phi \rhd \psi$ is defined, then $\phi \rhd \psi$ is an abstract density. 
\end{theorem}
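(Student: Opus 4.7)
The plan is to show that $\pi_z(\phi \rhd \psi)$ exists in $\Psi^0$ for every $z \leq d(\phi \rhd \psi) = x \vee y$, where $x = d(\phi)$, $y = d(\psi)$. By the characterisation recorded just before the theorem, it suffices to produce a representation $\phi \rhd \psi = \mu \cdot \nu^{-1}$ with $\mu, \nu \in \Psi$ and $d(\nu) = \bot$, since then (\ref{eq:ExtProj}) delivers all required projections uniformly.

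First I would unpack the hypothesis. Since $\phi$ and $\psi$ are abstract densities there are $\alpha_1, \alpha_2, \beta_1, \beta_2 \in \Psi$ with $\phi = \alpha_1 \cdot \beta_1^{-1}$, $\psi = \alpha_2 \cdot \beta_2^{-1}$, $d(\beta_i) = \bot$, and $\delta(\beta_i) \leq \delta(\alpha_i)$. Formula (\ref{eq:ExtProj}) yields $\pi_{x \wedge y}(\psi) = \pi_{x \wedge y}(\alpha_2) \cdot \beta_2^{-1}$. Substituting into the definition of the composition and absorbing the idempotent $f_{\beta_2}$ (possible because $\delta(\beta_2) \leq \delta(\psi) \leq \delta(\phi \rhd \psi)$) gives the natural representation
\begin{eqnarray*}
\phi \rhd \psi = \alpha_1 \alpha_2 \cdot \bigl( \beta_1 \cdot \pi_{x \wedge y}(\alpha_2) \bigr)^{-1},
\end{eqnarray*}
whose numerator lies in $\Psi$ with domain $x \vee y$ and whose denominator lies in $\Psi$ with domain $x \wedge y$. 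By (\ref{eq:ExtProj}) this already establishes $\pi_z(\phi \rhd \psi)$ for every $z$ with $x \wedge y \leq z \leq x \vee y$.

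For $z$ not dominating $x \wedge y$ I would invoke the continuation identity of Lemma \ref{le:PropOfCond}(2), $\psi_{y \vert x \wedge y} = \psi_{y \vert \bot} \cdot (\psi_{x \wedge y \vert \bot})^{-1}$, to rewrite
\begin{eqnarray*}
\phi \rhd \psi = \bigl( \phi \cdot \psi_{y \vert \bot} \bigr) \cdot (\psi_{x \wedge y \vert \bot})^{-1}.
\end{eqnarray*}
A short computation using the same substitutions shows $\phi \cdot \psi_{y \vert \bot} = \alpha_1 \alpha_2 \cdot (\beta_1 \cdot \pi_\bot(\alpha_2))^{-1}$, an explicit $\bot$-denominator representation, so $\phi \cdot \psi_{y \vert \bot}$ is itself an abstract density and all its projections $\pi_z$ exist. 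In parallel, applying Theorem \ref{th:ExtProjTransComb}(2) to $\phi \cdot \psi_{y \vert x \wedge y}$, together with Lemma \ref{le:PropOfCond}(1) and the dominance condition $\delta(\pi_{x \wedge y}(\psi)) \leq \delta(\pi_{x \wedge y}(\phi))$, gives $\pi_x(\phi \rhd \psi) = \phi$ and hence $\pi_{x \wedge y}(\phi \rhd \psi) = \pi_{x \wedge y}(\phi)$, which is again an abstract density as the projection of one.

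The main obstacle I expect is stitching these pieces together rigorously: one must show that the projections delivered by (\ref{eq:ExtProj}) on the two representations, plus the further projections of the abstract density $\pi_{x \wedge y}(\phi)$, agree where they overlap and together cover every $z \leq x \vee y$. The technical heart is the repeated use of idempotent absorption controlled by the $\delta$-dominance hypothesis, together with the extension of the combination axiom A5' to $\Psi^0$ that was already exploited in the proof of Theorem \ref{th:ExtProjTransComb}.
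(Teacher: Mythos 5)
Your proposal contains the paper's entire proof as one of its side remarks, yet you end by declaring the argument incomplete; the ``main obstacle'' you name is not actually there. Recall that an abstract density is \emph{defined} as an element of $\Psi^0$ whose projection to the least element $\bot$ exists, this being equivalent to the existence of all projections. So there is no need to cover every $z \leq x \vee y$ by patching together overlapping representations: it suffices to establish $\pi_\bot(\phi \rhd \psi)$. You already observe that Theorem \ref{th:ExtProjTransComb} together with the dominance hypothesis $\delta(\pi_{x \wedge y}(\psi)) \leq \delta(\pi_{x \wedge y}(\phi))$ gives
\begin{eqnarray*}
\pi_{x \wedge y}(\phi \rhd \psi) = \pi_{x \wedge y}(\phi) \cdot f_{\pi_{x \wedge y}(\psi)} = \pi_{x \wedge y}(\phi),
\end{eqnarray*}
and that the right-hand side is an abstract density because $\phi$ is one. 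Transitivity then yields $\pi_\bot(\phi \rhd \psi) = \pi_\bot(\pi_{x \wedge y}(\phi \rhd \psi)) = \pi_\bot(\phi)$, which exists, and the proof is finished. This is exactly the paper's argument.

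By contrast, your primary plan --- exhibiting a representation $\phi \rhd \psi = \mu \cdot \nu^{-1}$ with $\mu,\nu \in \Psi$ and $d(\nu) = \bot$ --- is not delivered by your computations and should be abandoned. The explicit representation you obtain, $\alpha_1 \alpha_2 \cdot (\beta_1 \cdot \pi_{x \wedge y}(\alpha_2))^{-1}$, has denominator domain $x \wedge y$, and the rewriting $\phi \rhd \psi = (\phi \cdot \psi_{y \vert \bot}) \cdot (\psi_{x \wedge y \vert \bot})^{-1}$ does not improve matters: the factor $(\psi_{x \wedge y \vert \bot})^{-1}$ is still the inverse of an element of domain $x \wedge y$, and if you expand it through $\psi = \alpha_2 \cdot \beta_2^{-1}$ you land back at the same $(x \wedge y)$-denominator representation. (You are also applying Lemma \ref{le:PropOfCond}, stated for elements of $\Psi$, to the element $\psi \in \Psi^0$; this can be justified, but it is an extra burden your argument does not need.) Drop the representation-hunting and keep only the computation of $\pi_{x \wedge y}(\phi \rhd \psi)$.
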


\begin{proof}
Assume $d(\phi) = x$ and $d(\psi) = y$ and that $\phi \rhd \psi$ is defined. We show that the projection $\pi_\bot(\phi \cdot \psi_{y \vert x \wedge y})$ exists. In fact,
\begin{eqnarray*}
\lefteqn{\pi_{x \wedge y}(\phi \rhd \psi) = \pi_{x \wedge y}(\phi \cdot \psi \cdot (\pi_{x \wedge y}(\psi))^{-1}}Ê\\
&&=\pi_{x \wedge y}(\pi_x(\phi \cdot \psi \cdot (\pi_{x \wedge y}(\psi))^{-1})) = \pi_{x \wedge y}(\phi \cdot \pi_{x \wedge y}(\psi \cdot (\pi_{x \wedge y}(\psi))^{-1}) \\
&&= \pi_{x \wedge y}(\phi) \cdot \pi_{x \wedge y}(\psi) \cdot (\pi_{x \wedge y}(\psi))^{-1} = \pi_{x \wedge y}(\phi) \cdot f_{\pi_{x \wedge y}(\psi)} =
\pi_{x \wedge y}(\phi).
\end{eqnarray*}
Since $\phi$ is an abstract density, $\pi_\bot(\phi)$ exists and $\pi_\bot(\phi \rhd \psi) = \pi_\bot(\pi_{x \wedge y}(\phi \rhd \psi)) = \pi_\bot(\phi)$ and therefore $\pi_\bot(\phi \rhd \psi)$ exists too, hence $\phi \rhd \psi$ is an abstract density.
\end{proof}

The following theorems give the main properties of composition. It is a mathematically rigorous generalisation of the main theorems in \cite{jirousekshenoy14,jirousekshenoy15} to separative valuation algebras. 

\begin{theorem} \label{th:MainCompTheorem1}
Let $(\Psi,D;\leq,d,\cdot,\pi)$ be a separative valuation algebra satisfying the additional assumptions about units or the extended combination axiom stated above and $(D;\leq)$ a modular lattice. Let $\phi,\psi$ be abstract denisties with $d(\phi) = x$, $d(\psi) = y$, and $\delta(\pi_{x \wedge y}(\psi)) \leq \delta(\pi_{x \wedge y}(\phi))$. Then,
\begin{enumerate}
\item $d(\phi \rhd \psi) = x \vee y$.
\item $\pi_x(\phi \rhd \psi) = \phi$.
\item If $y \leq x$, then $\phi \rhd \psi = \phi$.
\item $\pi_{x \wedge y}(\phi) = \pi_{x \wedge y}(\psi)$ implies $\phi \rhd \psi = \psi \rhd \phi$.
\item If $x \wedge y \leq z \leq y$ then $(\phi \cdot \pi_z(\psi)) \rhd \psi = \phi \cdot \psi$.
\item If $x \wedge y \leq z \leq y$, then $(\phi \rhd \pi_z(\psi)) \rhd \psi = \phi \rhd \psi$.
\end{enumerate}
\end{theorem}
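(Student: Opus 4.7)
The plan is to unfold the definition $\phi \rhd \psi = \phi \cdot \psi \cdot (\pi_{x\wedge y}(\psi))^{-1}$ in each case and to reduce everything either to the Combination/Labeling axioms, or to the partial-projection machinery of Theorem \ref{th:ExtProjTransComb}, or to the absorption identity $\eta \cdot f_\chi = \eta$ whenever $\delta(\chi)\leq\delta(\eta)$. Throughout I will rely on the standing hypothesis $\delta(\pi_{x\wedge y}(\psi))\leq\delta(\pi_{x\wedge y}(\phi))\leq\delta(\phi)$, which is what makes the various $f_{\pi_{x\wedge y}(\psi)}$ factors disappear.

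Item~1 is just the extended Labeling Axiom applied to a product whose three factors have domains $x$, $y$ and $x\wedge y\leq x\vee y$. For item~2 I use Theorem \ref{th:ExtProjTransComb} together with Lemma \ref{le:PropOfCond}(1) to write
\begin{eqnarray*}
\pi_x(\phi\rhd\psi)=\pi_x(\phi\cdot\psi_{y\vert x\wedge y})=\phi\cdot\pi_{x\wedge y}(\psi_{y\vert x\wedge y})=\phi\cdot f_{\pi_{x\wedge y}(\psi)}=\phi,
\end{eqnarray*}
the final equality using absorption. Item~3 is the specialisation $y\leq x$, where $x\wedge y=y$ forces $\pi_{x\wedge y}(\psi)=\psi$, so $\phi\rhd\psi=\phi\cdot\psi\cdot\psi^{-1}=\phi\cdot f_\psi=\phi$ (again by absorption, since $\delta(\psi)\leq\delta(\phi)$ by hypothesis). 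Item~4 is immediate from commutativity once one observes that under the assumption $\pi_{x\wedge y}(\phi)=\pi_{x\wedge y}(\psi)$ the inverses that appear in $\phi\rhd\psi$ and $\psi\rhd\phi$ coincide (and the existence condition $\delta(\pi_{x\wedge y}(\phi))\leq\delta(\pi_{x\wedge y}(\psi))$ for $\psi\rhd\phi$ is then automatic).

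Items 5 and 6 are where modularity enters. In both cases I set $x'=x\vee z$ (or the appropriate larger domain) and must evaluate $x'\wedge y$. Since $x\wedge y\leq z\leq y$, modularity gives
\begin{eqnarray*}
(x\vee z)\wedge y=(x\wedge y)\vee z=z,
\end{eqnarray*}
which is the key identity that makes the formulas close up. I also use the easy sublemma $x\wedge z=x\wedge y$ (valid in any lattice under the same hypothesis $x\wedge y\leq z\leq y$). With these in hand, for item~5 I compute
\begin{eqnarray*}
(\phi\cdot\pi_z(\psi))\rhd\psi=\phi\cdot\pi_z(\psi)\cdot\psi\cdot(\pi_z(\psi))^{-1}=\phi\cdot\psi\cdot f_{\pi_z(\psi)}=\phi\cdot\psi,
\end{eqnarray*}
the last step by absorption since $\delta(\pi_z(\psi))\leq\delta(\psi)\leq\delta(\phi\cdot\psi)$. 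For item~6, using item~5 structurally, I expand
\begin{eqnarray*}
(\phi\rhd\pi_z(\psi))\rhd\psi=\phi\cdot\pi_z(\psi)\cdot(\pi_{x\wedge y}(\psi))^{-1}\cdot\psi\cdot(\pi_z(\psi))^{-1}=\phi\cdot\psi\cdot(\pi_{x\wedge y}(\psi))^{-1}\cdot f_{\pi_z(\psi)},
\end{eqnarray*}
and the $f_{\pi_z(\psi)}$ is again absorbed.

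The main obstacle, as far as I can see, is not algebraic manipulation but rather verifying at each step that the partial projections actually exist and that the required domination relations $\delta(\cdot)\leq\delta(\cdot)$ hold, so that Theorem \ref{th:ExtProjTransComb} applies and the idempotent factors can legitimately be absorbed. In particular, in items 5 and 6 one has to double-check that the left-hand compositions are defined, i.e.\ that $\delta(\pi_{x'\wedge y}(\psi))\leq\delta(\pi_{x'\wedge y}(\phi\cdot\pi_z(\psi)))$ after reducing $x'\wedge y$ to $z$ by modularity; this follows from $\pi_z(\phi\cdot\pi_z(\psi))=\pi_z(\psi)\cdot\pi_{x\wedge y}(\phi)$ (obtained from A5 or A5' together with $x\wedge z=x\wedge y$) and from the hypothesis $\delta(\pi_{x\wedge y}(\psi))\leq\delta(\pi_{x\wedge y}(\phi))$. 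Once this bookkeeping is carefully done, the six items follow in a few lines each.
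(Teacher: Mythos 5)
Your proposal is correct and follows essentially the same route as the paper: unfold the definition, use the extended Combination Axiom (Theorem \ref{th:ExtProjTransComb}) and absorption of the idempotents $f_{\pi_z(\psi)}$, and invoke modularity exactly where the paper does, via $(x\vee z)\wedge y=(x\wedge y)\vee z=z$ together with the sublemma $x\wedge z=x\wedge y$. The only cosmetic difference is that in item~2 you route the computation through Lemma \ref{le:PropOfCond}(1) rather than expanding $\psi\cdot(\pi_{x\wedge y}(\psi))^{-1}$ inline, which amounts to the same calculation.
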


\begin{proof}
The proof depends on the generalised valuation algebra axioms for separative valuation algebras, especially the extended combination axiom  (Theorem \ref{th:ExtProjTransComb}). 

1.) is a simple consequence of the (generalised) Labeling Axiom.

2.) By the Combination Axiom, we have
\begin{eqnarray*}
\lefteqn{\pi_x(\phi \cdot \psi \cdot (\pi_{x \wedge y}(\psi))^{-1})}Ê\\ 
&&=\phi \cdot \pi_{x \wedge y}(\psi \cdot (\pi_{x \wedge y}(\psi))^{-1}) \\
&&= \phi \cdot \pi_{x \wedge y}(\psi) \cdot (\pi_{x \wedge y}(\psi))^{-1} \\
&&= \phi \cdot f_{\pi_{x \wedge y}(\psi)} = \phi,
\end{eqnarray*}
since $\delta(\pi_{x \wedge y}(\psi)) \leq \delta(\pi_{x \wedge y}(\phi)) \leq \delta(\phi)$.

3.) If $y \leq x$ then $x \wedge y = y$ and therefore $\phi \rhd \psi = \phi \cdot f_{\pi_{y}(\psi)} = \phi$ as before.

4.) If $\pi_{x \wedge y}(\phi) = \pi_{x \wedge y}(\psi)$, then $\phi \rhd \psi = \phi \cdot \psi \cdot (\pi_{x \wedge y}(\psi))^{-1} = \psi \cdot \phi \cdot (\pi_{x \wedge y}(\phi))^{-1} = \psi \rhd \phi$ since $\delta(\pi_{x \wedge y}(\psi)) = \delta(\pi_{x \wedge y}(\phi))$.

5.) If $z \leq y$ then by the modular law in the modular lattice $(D;\leq)$ we obtain that $(x \vee z) \wedge y = (x \wedge y) \vee z$ and from $x \wedge y \leq z$ it follows that $(x \vee z) \wedge y = z$. Note that $\delta(\pi_z(\phi \cdot \pi_z(\psi))) = \delta(\pi_z(\phi) \cdot \pi_z(\psi))
\geq \delta(\pi_z(\psi))$ so that $(\phi \cdot \pi_z(\psi)) \rhd \psi$ is well defined. 

We have
\begin{eqnarray*}
\lefteqn{(\phi \cdot \pi_z(\psi)) \rhd \psi}Ê\\
&&= \phi \cdot \pi_z(\psi) \cdot \psi \cdot (\pi_{(x \vee z) \wedge y}(\psi))^{-1} \\
&&= \phi \cdot \pi_z(\psi) \cdot \psi \cdot (\pi_z(\psi))^{-1} \\
&&= \phi \cdot \psi \cdot f_{\pi_z(\psi)}.
\end{eqnarray*}
But $\delta(\pi_z(\psi)) \leq \delta(\psi)$ implies then that $(\phi \rhd \pi_z(\psi)) \rhd \psi = \phi \cdot \psi$.

6.) As before, we have $(x \vee z) \wedge y = z$. Further $x \wedge y \leq z \leq y$ implies $x \wedge y \leq x \wedge z \leq x \wedge y$, hence $x \wedge z = x \wedge y$. This implies that $\phi \rhd \pi_z(\psi)$ is a density and all compositions are well defined. This time we have
\begin{eqnarray*}
\lefteqn{(\phi \rhd \pi_z(\psi)) \rhd \psi}Ê\\
&&= (\phi \cdot \pi_z(\psi) \cdot (\pi_{x \wedge z}(\psi))^{-1}) \cdot \psi 
\cdot (\pi_{(x \vee z) \wedge y}(\psi))^{-1} \\
&&= (\phi \cdot \pi_z(\psi) \cdot (\pi_{x \wedge z}(\psi))^{-1}) \cdot \psi 
\cdot (\pi_z(\psi))^{-1} \\
&&= \phi \cdot \psi \cdot (\pi_{x \wedge y}(\psi))^{-1} \cdot f_{\pi_z(\psi)} \\
&&= \phi \rhd \psi.
\end{eqnarray*}

This concludes the proof of the theorem.
\end{proof}

As a complement, note that if $\phi \rhd \psi = \psi \rhd \phi$, 
\begin{eqnarray}
\phi \cdot \psi \cdot \pi_{x \wedge y}(\phi) = \phi \cdot \psi \cdot \pi_{x \wedge y}(\psi).
\nonumber
\end{eqnarray}
So, in this case we have $\pi_{x \wedge y}(\phi) = \pi_{x \wedge y}(\psi)$, if the valuation algebra is \textit{cancellative}; in this case commutativity of composition implies consistency of the valuations involved.

For distributive lattices $(D;\leq)$ stronger results are possible.

\begin{theorem} \label{th:MainCompTheorem2}
Let $(\Psi,D;\leq,d,\cdot,\pi)$ be a separative valuation algebra satisfying the additional assumptions about units or the extended combination axiom stated above and $(D;\leq)$ a distributive lattice. Let $\phi,\psi$ be abstract densities with $d(\phi) = x$, $d(\psi) = y$, and $\delta(\pi_{x \wedge y}(\psi)) \leq \delta(\pi_{x \wedge y}(\phi))$. Then,
\begin{enumerate}
\item If $x \geq y \wedge z$ and $\tau$ a density with $d(\tau) = z$, $\delta(\pi_{x \wedge z}(\tau)) \leq \delta(\pi_{x \wedge z}(\phi))$, then $(\phi \rhd \psi) \rhd \tau = (\phi \rhd \tau) \rhd \psi$.
\item If $x \wedge y \leq z \leq x \vee y$ then $\pi_z(\phi \rhd \psi) = \pi_{x \wedge z}(\phi) \rhd \pi_{y \wedge z}(\psi)$.
\item If $x \geq y \wedge z$ and and $\tau$ a density with $d(\tau) = z$, $\delta(\pi_{y \wedge z}(\tau)) \leq \delta(\pi_{y \wedge z}(\psi))$ then $(\phi \rhd \psi) \rhd \tau = \phi \rhd (\psi \rhd \tau)$. 
\item If $y \geq x \wedge z$ and and $\tau$ a density with $d(\tau) = z$, $\delta(\pi_{x \wedge z}(\tau)) \leq \delta(\pi_{x \wedge z}(\phi))$ and $\delta(\pi_{y \wedge z}(\tau) \leq \delta(\pi_{y \wedge z}(\psi)$, then $(\phi\ \rhd \psi) \rhd \tau = \phi\ \rhd (\psi \rhd \tau)$.
\end{enumerate}
\end{theorem}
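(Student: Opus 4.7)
The strategy for all four items is to expand every composition using Definition \ref{def:CompOp} as $\alpha \rhd \beta = \alpha \cdot \beta \cdot (\pi_{d(\alpha) \wedge d(\beta)}(\beta))^{-1}$, collapse every compound meet appearing in an index by distributivity of $(D;\leq)$, and then invoke the extended Combination Axiom of Theorem \ref{th:ExtProjTransComb} together with the identity $\pi_{d(\phi)}(\phi \rhd \psi) = \phi$ of Theorem \ref{th:MainCompTheorem1}. A single auxiliary lemma carries most of the combinatorial load: if $d(\phi) = x$, $d(\psi) = y$, and $x \wedge y \leq z \leq x \vee y$, then $\pi_z(\phi \cdot \psi) = \pi_{x \wedge z}(\phi) \cdot \pi_{y \wedge z}(\psi)$. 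I would prove this by first projecting to the intermediate domain $x \vee z$, for which equation (\ref{eq:ProjLemma}) applies because $x \leq x \vee z \leq x \vee y$; this yields $\phi \cdot \pi_{y \wedge z}(\psi)$ after simplifying $(x \vee z) \wedge y = (x \wedge y) \vee (y \wedge z) = y \wedge z$. I would then project from $x \vee z$ down to $z$ using the same axiom with $\pi_{y \wedge z}(\psi)$ as the outer factor, relying on $z \leq x \vee (y \wedge z) = (x \vee y) \wedge (x \vee z)$, which again follows from distributivity.

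For item 1, expanding both sides and using distributivity with the hypothesis $y \wedge z \leq x$ gives $(x \vee y) \wedge z = (x \wedge z) \vee (y \wedge z) = x \wedge z$ and symmetrically $(x \vee z) \wedge y = x \wedge y$, so both sides reduce to $\phi \cdot \psi \cdot \tau \cdot (\pi_{x \wedge y}(\psi))^{-1} \cdot (\pi_{x \wedge z}(\tau))^{-1}$ and agree by commutativity of $\cdot$. For item 2, I would apply the auxiliary lemma to $\pi_z(\phi \cdot \psi)$ and then commute $(\pi_{x \wedge y}(\psi))^{-1}$ through the projection via (\ref{eq:ProjLemma}), whose hypothesis is met because $x \wedge y$ lies below $z$. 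On the right-hand side, $\pi_{x \wedge z}(\phi) \rhd \pi_{y \wedge z}(\psi)$ carries an inverse indexed by $(x \wedge z) \wedge (y \wedge z) = x \wedge y \wedge z = x \wedge y$ (using $x \wedge y \leq z$), and transitivity of projection collapses $\pi_{x \wedge y}(\pi_{y \wedge z}(\psi))$ to $\pi_{x \wedge y}(\psi)$, matching the left-hand side.

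Items 3 and 4 follow the same pattern, but the symmetric hypotheses dictate which meet collapses. For item 3, $y \wedge z \leq x$ gives $(x \vee y) \wedge z = x \wedge z$ on the left; on the right I would invoke item 2 itself to compute $\pi_{x \wedge (y \vee z)}(\psi \rhd \tau) = \pi_{x \wedge y}(\psi) \rhd \pi_{x \wedge z}(\tau)$, the hypothesis guaranteeing $y \wedge z \leq x \wedge (y \vee z) \leq y \vee z$. After simplifying with $x \wedge y \wedge z = y \wedge z$, both sides reduce to $\phi \cdot \psi \cdot \tau \cdot (\pi_{x \wedge y}(\psi))^{-1} \cdot (\pi_{x \wedge z}(\tau))^{-1}$, the residual idempotent $f_{\pi_{y \wedge z}(\tau)}$ being absorbed since $\delta(\pi_{y \wedge z}(\tau)) \leq \delta(\psi)$. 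Item 4 is handled dually: the hypothesis $x \wedge z \leq y$ forces $(x \vee y) \wedge z = y \wedge z$ and $x \wedge (y \vee z) = x \wedge y$, after which Theorem \ref{th:ExtProjTransComb} together with Theorem \ref{th:MainCompTheorem1} item 2 yields $\pi_{x \wedge y}(\psi \rhd \tau) = \pi_{x \wedge y}(\pi_y(\psi \rhd \tau)) = \pi_{x \wedge y}(\psi)$, and both sides collapse to $\phi \cdot \psi \cdot \tau \cdot (\pi_{x \wedge y}(\psi))^{-1} \cdot (\pi_{y \wedge z}(\tau))^{-1}$.

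The main obstacle is the auxiliary projection lemma underpinning item 2 and its reuse in item 3: every invocation of the extended Combination Axiom requires verifying that the target lies in the permitted range, and throughout one must track the $\delta$-domination hypotheses to ensure that every inverse, idempotent and composition actually exists in $\Psi^0$ (including the $\delta$-conditions needed so that $(\phi \rhd \psi) \rhd \tau$ and $\phi \rhd (\psi \rhd \tau)$ themselves are defined, which is where the second domination assumption in item 4 is consumed). Once the distributivity identities have been written out, the remainder of the argument reduces to pure lattice algebra and commutative-semigroup manipulation.
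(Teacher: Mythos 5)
Your proposal is correct at the level of rigor the paper itself adopts, and items 1, 3 and 4 follow essentially the same route as the paper: expand every composition, collapse the compound meets by distributivity, and use $\pi_{x\wedge y}(\psi \rhd \tau)=\pi_{x\wedge y}(\pi_y(\psi\rhd\tau))=\pi_{x\wedge y}(\psi)$ (resp.\ the absorption of the residual idempotent $f_{\pi_{y\wedge z}(\tau)}$) to match the two sides; your item 3 even uses the same trick as the paper of feeding item 2 back in to evaluate $\pi_{x\wedge(y\vee z)}(\psi\rhd\tau)$. Where you genuinely diverge is item 2. The paper never isolates a projection identity for plain products; instead it derives $\pi_z(\phi\rhd\psi)=\pi_{x\wedge z}(\phi)\rhd\pi_{y\wedge z}(\psi)$ entirely inside the calculus of the composition operator, chaining item 6 and item 2 of Theorem \ref{th:MainCompTheorem1} with the exchange law of item 1 of the present theorem (via the identity $\pi_{x\wedge y}(\phi)\rhd\phi=\phi$) and projecting in two stages through $x\vee z$. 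You instead prove the auxiliary lemma $\pi_z(\phi\cdot\psi)=\pi_{x\wedge z}(\phi)\cdot\pi_{y\wedge z}(\psi)$ for $x\wedge y\leq z\leq x\vee y$ by two applications of (\ref{eq:ProjLemma}) — first to $x\vee z$, then down to $z$ with the roles of the factors swapped, using $z\leq x\vee(y\wedge z)=(x\vee y)\wedge(x\vee z)$ — and then pull the factor $(\pi_{x\wedge y}(\psi))^{-1}$ out of the projection. Your lattice computations check out, and the $\delta$-domination needed on the right-hand side is exactly the standing hypothesis. The trade-off: your lemma is a clean, reusable statement (essentially the strong combination axiom over a distributive lattice) and makes item 2 a three-line computation, but it requires you to apply the extended combination axiom of Theorem \ref{th:ExtProjTransComb} relative to \emph{either} factor of a product of abstract densities and to justify extracting the inverse from under $\pi_z$, i.e.\ more bookkeeping about where partial projection in $\Psi^0$ is defined; the paper's route avoids any new projection identity by staying within already-proved composition laws, at the cost of a longer and less transparent chain of manipulations. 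Both are acceptable; if you write yours up, state and prove the auxiliary lemma explicitly for elements of $\Psi^0$ (not just $\Psi$), since that is where the real work hides.
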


\begin{proof}
1.) The assumptions guarantee that $\phi\ \rhd \psi$ as well as $\phi\ \rhd \tau$ are densities. If $x \geq y \wedge z$, then by the distributivity of the lattice $(D;\leq)$ it follows that $(x \vee y) \wedge z = (x \wedge z) \vee (y \wedge z) = x \wedge z$ and similarly $(x \vee z) \wedge y = x \wedge y$. By item 2 of Theorem \ref{th:MainCompTheorem1}  we have $\pi_{x \wedge z}(\phi \rhd \psi) = \pi_{x \wedge z}(\phi)$, hence $\delta(\pi_{x \wedge z}(\phi \rhd \psi)) \leq \delta(\pi_{x \wedge z}(\tau))$ and simiarly, $\delta(\pi_{x \wedge y}(\phi \rhd \tau)) \leq \delta(\pi_{x \wedge y}(\psi))$. So, $(\phi \rhd \psi) \rhd \tau$ and $\phi \rhd \tau) \rhd \psi$ are well defined. Thus, we have further
\begin{eqnarray*}
\lefteqn{(\phi \rhd \psi) \rhd \tau} \\
 &&= (\phi \cdot \psi \cdot (\pi_{x \wedge y}(\psi))^{-1}) \cdot \tau \cdot (\pi_{(x \vee y) \wedge z}(\tau))^{-1} \\
&&= (\phi \cdot \tau (\pi_{x \wedge z}(\tau))^{-1})  \cdot \psi \cdot (\pi_{(x \vee z) \wedge y}(\psi))^{-1} \\
&&=(\phi \rhd \tau) \rhd \psi.
\end{eqnarray*}

2.) If $x \wedge y \leq z \leq x \vee y$ then we have also $x \wedge y \leq y \wedge z \leq y$ and we can use item 6 of Theorem \ref{th:MainCompTheorem1}, to obtain
\begin{eqnarray}
\pi_{x \vee z}(\phi \rhd \psi) = \pi_{x \vee z}((\phi \rhd \pi_{y \wedge z}(\psi) \rhd \psi).
\nonumber
\end{eqnarray}
Since $x \wedge y \leq z \leq x \vee y$ implies further that $x \vee (y \wedge z) = (x \vee y) \wedge (x \vee z) = x \vee z$ (distributivity) it follows from item 2 of  Theorem \ref{th:MainCompTheorem1} that 
\begin{eqnarray}
\pi_{x \vee z}(\phi \rhd \psi) = \phi \rhd \pi_{y \wedge z}(\psi).
\nonumber
\end{eqnarray}
Note that $\pi_{x \wedge y}(\phi) \rhd \phi = \pi_{x \wedge y}(\phi) \cdot \phi \cdot (\pi_{x \wedge y}(\phi))^{-1} = \phi$. Therefore, using item 1 just proved above, we have
\begin{eqnarray}
\pi_{x \vee z}(\phi \rhd \psi) = (\pi_{x \wedge y}(\phi) \rhd \phi) \rhd \pi_{y \wedge z}(\psi) = (\pi_{x \wedge y}(\phi) 
\rhd \pi_{y \wedge z}(\psi)) \rhd \phi.
\nonumber
\end{eqnarray}

Next we compute $\pi_z(\phi \rhd \psi)$ in the same way, again using properties 6 and 2 of Theorem \ref{th:MainCompTheorem1} and property 1 of the present theorem:
\begin{eqnarray*}
\lefteqn{\pi_z(\phi \rhd \psi) = \pi_z(\pi_{x \vee z}(\phi \rhd \psi))}Ê\\
 &&= \pi_z((\pi_{x \wedge y}(\phi) \rhd \pi_{y \wedge z}(\psi)) \rhd \phi) \\
&&= \pi_z(((\pi_{x \wedge y}(\phi) \rhd \pi_{y \wedge z}(\psi)) \rhd \pi_{x \wedge z}(\phi)) \rhd \phi) \\
&&= (\pi_{x \wedge y}(\phi) \rhd \pi_{y \wedge z}(\psi)) \rhd \pi_{x \wedge z}(\phi) \\
&&= (\pi_{x \wedge y}(\phi) \rhd  \pi_{x \wedge z}(\phi)) \rhd \pi_{y \wedge z}(\psi)) \\
&&= \pi_{x \wedge z}(\phi) \rhd \pi_{y \wedge z}(\psi).
\end{eqnarray*}
We leave it to the reader to verify the conditions for the application of the the properties of Theorem \ref{th:MainCompTheorem1} and of item 1 of the present theorem.

3.) All compositions occurring here are well defined, as can be verified as above in item 1. From the definition of composition, we have
\begin{eqnarray}
\phi \rhd(\psi \rhd \tau) &=&\phi \cdot (\psi \rhd \tau) \cdot (\pi_{x \wedge (y \vee z)}(\psi \rhd \tau))^{-1}
\end{eqnarray}
Here we may apply item 2 of the present theorem for the last term, so that
\begin{eqnarray}
\phi \rhd(\psi \rhd \tau) &=&\phi \cdot (\psi \rhd \tau) \cdot (\pi_{x \wedge y}(\psi) \rhd \pi_{x \wedge z}(\tau))^{-1}.
\nonumber
\end{eqnarray}
Note that under the assumption $x \geq y \wedge z$ we have $(x \wedge y) \wedge (y \wedge z) = y \wedge z$. Therefore, we have $\pi_{x \wedge y}(\psi) \rhd \pi_{x \wedge z}(\tau) = \pi_{x \wedge y}(\psi) \cdot \pi_{x \wedge z}(\tau) \cdot (\pi_{y \wedge z}(\tau))^{-1}$.
This allows us to deduce
\begin{eqnarray*}
\lefteqn{\phi \rhd(\psi \rhd \tau)}Ê\\
&&= \phi \cdot \psi \cdot \tau \cdot (\pi_{y \wedge z}(\tau))^{-1} \cdot (\pi_{x \wedge y}(\psi) \rhd \pi_{x \wedge z}(\tau))^{-1} \\
&&= \phi \cdot \psi \cdot \tau \cdot (\pi_{y \wedge z}(\tau))^{-1} \cdot (\pi_{x \wedge y}(\psi))^{-1} \cdot (\pi_{x \wedge z}(\tau))^{-1} \cdot \pi_{y \wedge z}(\tau) \\
&&= (\phi \rhd  \psi) \cdot \tau \cdot (\pi_{(x \vee y) \wedge z}(\tau))^{-1} \\
&&= (\phi \rhd  \psi) \rhd \tau.
\end{eqnarray*}

4.) Again, all compositions are well defined. Note that from $y \geq x \wedge z$ it follows that $(x \vee y) \wedge z = (x \wedge z) \vee (y \wedge z) = y \wedge z$ and $x \wedge (y \vee z) = (x \wedge y) \vee (x \wedge z) = x \wedge y$. So, using the definition of the compositional operator, we obtain
\begin{eqnarray}
(\phi \rhd \psi) \rhd \tau &=& (\phi \cdot \psi \cdot (\pi_{x \wedge y}(\psi))^{-1}) \cdot \tau \cdot (\pi_{y \wedge z}(\tau))^{-1}
\nonumber
\end{eqnarray}
On the other hand, we have
\begin{eqnarray}
\phi \rhd (\psi \rhd \tau) = \phi \cdot (\psi \rhd \tau) \cdot (\pi_{x \wedge y}(\psi \rhd \tau))^{-1}.
\nonumber
\end{eqnarray}
From item 2 of Theorem \ref{th:MainCompTheorem1} we get, using $\pi_{x \wedge y}(\psi\ \rhd \tau) = \pi_{x \wedge y}(\pi_y(\psi\ \rhd \tau))$,
\begin{eqnarray*}
\lefteqn{\phi \rhd (\psi \rhd \tau)}Ê\\
&&= \phi \cdot (\psi \rhd \tau) \cdot (\pi_{x \wedge y}(\psi))^{-1} \\
&&= \phi \cdot \psi \cdot \tau \cdot (\pi_{y \wedge z}(\tau))^{-1} \cdot (\pi_{x \wedge y}(\psi))^{-1}.
\end{eqnarray*}
The right hand side here is equal to the one above for $(\phi \rhd \psi) \rhd \tau$. So, we have $(\phi \rhd \psi) \rhd \tau = \phi \rhd (\psi \rhd \tau)$.
\end{proof}

At least for distributive lattices $(D;\leq)$ and a few other conditions a separative valuation algebra allows for compositional models, and the results in \cite{jirousekshenoy14,jirousekshenoy15} carry over to this more general case. Only this extension makes compositional modelling for belief functions, Gaussian densities and density functions in general available, since the corresponding valuation algebras are not regular, but but only separative or cancellative respectively.


\section{Conclusion}

This paper establishes the different structures of valuation algebras, allowing for inverses, conditionals and compositional modelling.

\bibliography{text}
\bibliographystyle{authordate3}


\end{document}